
\documentclass[preprint,12pt]{elsarticle}




\usepackage[utf8]{inputenc}
\usepackage{amsthm}
\usepackage{booktabs}
\usepackage{float}
\usepackage[utf8]{inputenc}
\usepackage{dirtytalk}
\usepackage[dvipsnames]{xcolor}
\usepackage{algorithm}
\usepackage{svg}
\usepackage[noend]{algpseudocode}
\usepackage{subfig}
\usepackage{enumitem}
\usepackage{tikz}
\usepackage{multirow}
\usepackage[export]{adjustbox}
\usepackage{amsmath}
\usepackage{amssymb}
\usepackage{float}
\usepackage{svg}
\usepackage{enumitem}
\usepackage{makecell}
\usepackage{graphicx}

\theoremstyle{definition}
\newtheorem{definition}{Definition}[section]
\newtheorem{lemma}{Lemma}[section]
\newtheorem{theorem}{Theorem}[section]
\newtheorem{corollary}{Corollary}[section]
\newtheorem{proposition}{Proposition}[section]
\newtheorem{example}{Example}[section]


\journal{arxiv.org}

\begin{document}

\begin{frontmatter}



\title{Multi-class granular approximation by means of disjoint and adjacent fuzzy granules}


\author{ Marko Palangeti\' c$^{a}$, Chris Cornelis$^a$,  Salvatore Greco$^{b,c}$, Roman S\l{}owi\' nski$^{d,e}$\\}

\address{$^a$Department of Applied Mathematics, Computer Science and Statistics, \\ Ghent University, Ghent, Belgium, \{marko.palangetic, chris.cornelis\}@ugent.be  \\
$^b$Department of Economics and Business, University of Catania, Catania, Italy,\\  salgreco@unict.it\\
$^c$Portsmouth Business School, Centre of Operations Research and Logistics (CORL), \\
University of Portsmouth, Portsmouth, United Kingdom \\
$^d$Institute of Computing Science,  Pozna\'n University of Technology, Pozna\'n, Poland,\\ roman.slowinski@cs.put.poznan.pl \\
$^e$Systems Research Institute, Polish Academy of Sciences, Warsaw, Poland \\}

\address{}

\begin{abstract}
In granular computing, fuzzy sets can be approximated by granularly representable sets that are as close as possible to the original fuzzy set w.r.t. a given closeness measure. Such sets are called granular approximations. In this article, we introduce the concepts of disjoint and adjacent granules and we examine how the new definitions affect the granular approximations. First, we show that the new concepts are important for binary classification problems since they help to keep decision regions separated (disjoint granules) and at the same time to cover as much as possible of the attribute space (adjacent granules). 
Later, we consider granular approximations for multi-class classification problems leading to the definition of a multi-class granular approximation. Finally, we show how to efficiently calculate multi-class granular approximations for Łukasiewicz fuzzy connectives. We also provide graphical illustrations for a better understanding of the introduced concepts. 
\end{abstract}

\begin{keyword}
Granular computing \sep  Fuzzy sets \sep Machine learning


\end{keyword}

\end{frontmatter}


\section{Introduction}

Granular computing is a paradigm in information processing which includes a segmentation of complex information into smaller pieces called \textit{information granules} \cite{yao2013granular,pedrycz2014allocation,bargiela2006roots}. An information granule (or just a \textit{granule}) is a collection of instances that can be interpreted jointly. For example, an image of a human body can be disentangled into certain body parts that have precise meanings. Also, those parts can be later segmented into even smaller meaningful parts, etc. The previous example also shows the hierarchical nature of granulation, i.e., the definition of granules depends on the level of detail that we want to capture. Granules are usually constructed based on a common association (indiscernibility, similarity, functionality, proximity, coherency etc.) of instances \cite{degang2011granular,yao1999granular}. 

Fuzzy logic and fuzzy set theory are used to model partial truth of logical expressions \cite{zadeh1965}. In other words, the expression is not only true or false, but it possesses a degree of truth represented by a value from interval $[0,1]$. Value $0$ stands for a completely false statement, while value 1 stands for a completely true statement. With the help of fuzzy logic, one can introduce the concept of a fuzzy granule \cite{zadeh1979fuzzy} where every instance has a degree of membership to a certain granule. Fuzzy granules are useful when it is hard to determine sharp boundaries of pieces obtained from a disentanglement of a complex object. In such case, soft boundaries are expressed using fuzzy sets.

Lotfi Zadeh identified granulation as one of three basic concepts in underlying human cognition \cite{zadeh1997toward}, the other two being organization and causation. While organization represents the integration of parts into a whole, granulation refers to the opposite process. With fuzziness as a key part of the granulation in human cognition, humans are able to make reasonable decisions in a world that is characterized with partial knowledge, partial certainty, partial truth and imprecision in general.

In this article, granules are identified in \textit{information tables} based on the concept of data consistency, following our previous work in \cite{palangetic2021granular,palangetic2021novel}. Assume we have a prediction problem where we want to assign a decision label to a given instance described by condition attributes. In this setting, we say that two instances are consistent w.r.t. a given relation, if their relation on the condition attributes implies the same type of relation on the decision attribute. The relations that we consider here are (fuzzy) indiscernibility and (fuzzy) dominance. Based on that, an instance is consistent in a dataset if it is consistent with all other instances. For a consistent instance (w.r.t. a given relation), a granule is formed as a conjunction of two concepts:
\begin{itemize}
    \item the set of instances that relate to the given consistent instance, and
    \item the association of the consistent instance to a particular decision.
\end{itemize}
Due to consistency, the instances that relate (w.r.t. a given relation like e.g. indiscernibility or dominance) to a given consistent instance will be associated to the same decision or to a decision that relates to the decision of the consistent instance. In a classification problem, we have decision classes and association of the consistent instance refers to the membership of the instance to a decision class. In a regression problem, the association refers to the numerical value that the consistent instance takes in the decision attribute. 

In practice, due to perturbation in data caused by incomplete knowledge or by random effects that occur during data generation, datasets contain instances that are not consistent. To make a dataset consistent, different approaches have been applied. The best-known symbolic approach to this issue is the rough set approach (or the indiscernibility-based rough set approach (IRSA)) and its generalizations like the dominance-based rough set approach (DRSA) and fuzzy rough sets (fuzzy IRSA and fuzzy DRSA) \cite{pawlak1982rough,greco1998new,dubois1990rough,palangetic2021granular}. The disentanglement of crisp rough sets into granules was discussed in \cite{yao1999granular} and of fuzzy rough sets in \cite{degang2011granular,wang2015granular,fang2019granular}. The IRSA and DRSA were integrated into the preorder-based rough set approach (PRSA), and their fuzzy counterparts into the fuzzy PRSA. A more comprehensive overview is provided in \cite{palangetic2021granular}.

On the other hand, removing inconsistencies using machine learning w.r.t. a crisp preorder relation is covered in \cite{kotlowski2008statistical} and w.r.t. a fuzzy preorder relation in \cite{palangetic2021novel}. The result of the former approach is called \textit{monotone approximation} (due to the monotonicity properties of the granules), while the result of the latter approach is called \textit{granular approximation}. These machine learning approaches include an optimization procedure that removes inconsistencies at the least possible cost (w.r.t. a certain loss function).

In this article, we extend the definition of granular approximations to the multi-class context. We introduce concepts of disjoint and adjacent fuzzy granules and we discuss how these concepts relate to the formerly introduced granular approximations. They are important in classification problems since they help us to keep decision regions separated (disjoint granules) while covering as much as possible of an attribute space (adjacent granules). Then, we formulate an optimization procedure in order to extend granular approximations to the multi-class classification problem leading to the definition of \textit{multi-class granular approximations}.
Such approximation is a union of granules constructed in the way described above; it is a fuzzy set constructed as a conjunction of a fuzzy relation and an association value. These association values, as discussed in \cite{palangetic2021novel}, can be interpreted as the degree up to which an instance belongs to a certain decision class.

In some classification problems, application of ``the degree up to which an instance belongs to a class" may not be possible, since a decision class may be defined in a strictly discrete way, i.e., an instance belongs to a decision class or not. Examples of the case where consideration of membership degrees is more appropriate are recommender systems. A degree of preference for a certain product by a user can be modeled using values between 0 and 1. It is also possible that collected data contain only binary preferences (e.g., likes and dislikes) while the underlying degree of preference is hidden and can be estimated using machine learning techniques.

The remainder of this paper is structured as follows. In Section \ref{sec:preliminaries}, we recall some useful preliminaries about fuzzy logic theory, fuzzy rough and granularly representable sets, and granular approximations. Section \ref{sec:disjoint_and_adjacent_granules} deals with the concept of disjoint granules and adjacent granules. It provides definitions of the concepts together with an analysis of how these definitions pertain to the granular approximations introduced in the previous section. Section \ref{sec:case_of_a_classification_problem} explains how the concepts from the previous sections can be applied in binary and multi-class classification problems, and it introduces the definition of a multi-class granular approximation. Section \ref{sec:calculation} presents how to efficiently calculate multi-class granular approximations and provides a graphical illustration of how the granules look in practice. Section \ref{sec:conclusion_and_future_work} contains the conclusion and outlines future work.

\section{Preliminaries}
\label{sec:preliminaries}
\subsection{Fuzzy logic connectives}
In this subsection, the definitions and terminology are based on \cite{klement2013triangular}. Recall that {\em $t$-norm} $T:[0,1]^2 \rightarrow [0,1]$ is a binary operator which is commutative, associative, non-decreasing in both arguments, and $ \forall x \in [0,1] ,\, T(x,1) = x$. Since a $t$-norm is associative, we may extend it unambiguously to a $[0,1]^n \rightarrow [0,1]$ mapping for any $n > 2$. Some commonly used $t$-norms are listed in the left-hand side of Table \ref{table:tnorms}.

\begin{table}[H]
\begin{adjustbox}{width=\columnwidth,center}
    \begin{tabular}{c|rcl|rcl}
    Name & Definition & & & R-implicator\\
    \hline
Minimum &         $T_M(x,y)$ &=& $\min(x,y)$  & $I_{T_M}(x,y)$ &=& $\left\{\begin{array}{cc}
       1 & \mbox{if $x \le y$}  \\
            y & \mbox{otherwise} 
        \end{array}
        \right.$    \\
Product        & $T_P(x,y)$ &=& $xy$ & $I_{T_P}(x,y)$ &=& $\left\{\begin{array}{cc}
       1 & \mbox{if $x \le y$}  \\
            \frac{y}{x} & \mbox{otherwise} 
        \end{array}
        \right.$ \\
{\L}ukasiewicz    &    $T_L(x,y)$ &=& $\max(0,x+y-1)$ &    $I_{T_L}(x,y)$ &=& $\min(1,1-x+y)$ \\
Drastic     &   $T_D(x,y)$ &=& $\left\{\begin{array}{cc}
               \min(x,y) & \mbox{if $\max(x,y) = 1$}  \\
            0 & \mbox{otherwise} 
        \end{array}
        \right.$  &   $I_{T_D}(x,y)$ &=& $\left\{\begin{array}{cc}
           y & \mbox{if $x = 1$}  \\
            1 & \mbox{otherwise} 
        \end{array}
        \right.$\\
\makecell{Nilpotent \\ minimum} & $T_{nM}(x,y)$ &=& $\left\{\begin{array}{cc}
            \min(x,y) & \mbox{if $x + y > 1$}  \\
            0 & \mbox{otherwise} 
        \end{array}
        \right.$ & $I_{T_{nM}}(x,y)$ &=& $\left\{\begin{array}{cc}
            1 & \mbox{if $x \le y$}  \\
            \max(1-x,y) & \mbox{otherwise} 
        \end{array}
        \right.$\\
        \hline
    \end{tabular}
    \end{adjustbox}
    \vspace{8pt}
    \caption{Some common $t$-norms and their R-implicators}
    \label{table:tnorms}
\end{table}

An {\em implicator}  (or {\em fuzzy implication}) $I : [0,1]^2 \rightarrow [0,1]$ is a binary operator which is non-increasing in the first component, non-decreasing in the second one and such that $I(1,0) = 0$ and $I(0,0) = I(0,1) = I(1,1) = 1$. 
The residuation property holds for a $t$-norm $T$ and implicator $I$ if for all $x, y, z \in [0,1]$, it holds that
$$
T(x,y) \leq z \Leftrightarrow x \leq I(y,z).
$$ 
It is well-known that the residuation property holds if and only if $T$ is left-continuous and $I$ is defined as the residual implicator (R-implicator) of $T$, that is
$$I_T(x,y) = \sup \{\beta  \in [0,1] : T(x,\beta) \leq y\}.$$

The right-hand side of Table \ref{table:tnorms} shows the residual implicators of the corresponding $t$-norms. Note that all of them, except $I_{T_D}$, satisfy the residuation property.

If $T$ is a continuous $t$-norm, it is \textit{divisible}, i.e., for all $x, y \in [0,1]$, it holds that
$$
\min(x, y) = T(x, I(x, y)) = T(y, I(y, x)).
$$

A {\em negator}  (or { \em fuzzy negation}) $N : [0,1] \rightarrow [0,1]$ is a unary and non-increasing operator for which it holds that $N(0) = 1$ and $N(1) = 0$. A negator is involutive if $N(N(x)) = x$ for all $x \in [0,1]$. The standard negator, defined as 
$$
N_s(x) = 1-x,
$$ 
is involutive.

For 
 implicator $I$, we define the negator induced by $I$ as $N(x) = I(x, 0)$. We will call triplet $(T, I, N)$, obtained as previously explained, a residual triplet. For a residual triplet we have that the following properties hold for all $x, y, z \in [0,1]$:
\begin{subequations}
\begin{align}
&\bullet    &&T(x, y) \leq x \quad \text{and} \quad T(x,y) \leq y, &&\label{eq:t-norm_smaller_parameters} \\
&\bullet    &&I(x, y) \geq y, 
&&\label{eq:implicator_greater_second_parameter} \\
&\bullet    &&T(x, I(x, y)) \leq y, 
&&\label{eq:modus_ponens} \\
&\bullet    &&x \leq y \Leftrightarrow I(x,y) = 1, \,  \text{(ordering property)} &&\label{eq:ordering_property} \\
&\bullet    &&T(x, I(y, z)) \leq I(I(x,y), z), &&\label{eq:t_norm_implicator_property2}\\
&\bullet    &&I(T(x,y), z) = I(x, I(y, z)), &&\label{eq:t_norm_implicator_property} \\
&\bullet    &&T(x, N(y)) \leq N(I(x,y)) \, \text{(consequence of (\ref{eq:t_norm_implicator_property2}) when $z=0$),} &&\label{eq:implicator_negator_property}\\
&\bullet    &&N(T(x,y)) = I(x, N(y)) \, \text{(consequence of (\ref{eq:t_norm_implicator_property}) when $z=0$)}.
&&\label{eq:implicator_negator_property2}
\end{align}
\end{subequations}
Two fuzzy binary operators $B^1$ and $B^2$ are isomorphic if there exists a bijection $\varphi:[0,1] \rightarrow [0,1]$ such that $B^1 = \varphi^{-1}(B^2(\varphi(x), \varphi(y)))$ while unary operators $V^1$ and $V^2$ are isomorphic if $V^1 = \varphi^{-1}(V^2(\varphi(x)))$. Moreover, we write $B^1 \equiv B^2_{\varphi}$ and $V^1 \equiv V^2_{\varphi}$.

If residual triplet $(T, I, N)$ is generated by $t$-norm $T$, then the residual triplet generated by $T_{\varphi}$ is $(T_{\varphi}, I_{\varphi}, N_{\varphi})$.

A $t$-norm for which the induced negator of its R-implicator is involutive is called an IMTL $t$-norm. In Table \ref{table:tnorms}, $T_L$ and $T_{nM}$ are IMTL $t$-norms where the corresponding induced negator is $N_s$. A residual triplet $(T, I, N)$ that is generated with an IMTL $t$-norm is called an IMTL triplet. If $(T, I, N)$ is an IMTL triplet, then $(T_{\varphi}, I_{\varphi}, N_{\varphi})$ is also an IMTL triplet. 

For an IMTL triplet, we have that the following property holds for all $x, y, z \in [0,1]$:
\begin{align}
&&I(N(x), N(y)) = I(y, x) &&\label{eq:implicator_negator_property3}
\end{align}

A continuous $t$-norm is IMTL if and only if it is isomorphic to the Łukasiewicz $t$-norm. Such $t$-norm is \textit{strongly max-definable}, i.e., for all $x,y \in [0,1]$, it holds that
\begin{equation}
\label{eq:strong_max_definability}
\max(x, y) = I(I(x, y), y) = I(I(y, x), x).
\end{equation}
A residual triplet generated by a $t$-norm isomorphic to $T_L$, $T_{L,\varphi}$ is denoted by $(T_{L,\varphi}, I_{L, \varphi}, N_{L, \varphi})$. Note that $N_L \equiv N_s$.

\subsection{Fuzzy sets and fuzzy relations}

Given a non-empty universe set $U$, a fuzzy set $A$ on $U$ is an ordered pair $(U, m_A)$, where $m_A:U \rightarrow [0,1]$ is a membership function that indicates how much an element from $U$ is contained in $A$. Instead of $m_A(u)$, the membership degree is often written as $A(u)$. If the image of $m_A$ is $\{0,1\}$ then $A$ is a crisp (ordinary) set. For negator $N$, the fuzzy complement $coA$ is defined as $coA(u) = N(A(u))$ for $u\in U$. If $A$ is crisp then $coA$ reduces to the standard complement. For $\alpha \in (0,1]$, the $\alpha$-level set of fuzzy set $A$ is a crisp set defined as $A_{\alpha} = \{u \in U; A(u) \geq \alpha \}$.

A fuzzy relation $\widetilde{R}$ on $U$ is a fuzzy set on $U \times U$, i.e., a mapping $\widetilde{R}: U \times U \rightarrow [0,1]$ which indicates how much two elements from $U$ are related. Some relevant properties of fuzzy relations include:
\begin{itemize}
    \item $\widetilde{R}$ is reflexive if $\forall u \in U, \widetilde{R}(u,u) = 1$.
    \item $\widetilde{R}$ is symmetric if $\forall u,v \in U,\ \widetilde{R}(u,v) = \widetilde{R}(v,u)$.
    \item $\widetilde{R}$ is $T$-transitive w.r.t. $t$-norm $T$ if $\forall u,v,w \in U$ it holds that \\
    $T(\widetilde{R}(u,v),\widetilde{R}(v,w)) \leq \widetilde{R}(u,w)$.
\end{itemize}{}
A reflexive and $T$-transitive fuzzy relation is called a $T$-preorder relation while a symmetric $T$-preorder is a $T$-equivalence relation.

To illustrate some of these fuzzy relations, we assume that instances from $U$ are described with a finite set of numerical attributes $Q$. For attribute $q \in Q$, Let $u^{(q)}$ and $v^{(q)}$ be the evaluations of instances $u$ and $v$ on attribute $q$.
An example of a $T_L$-preorder relation (expressing dominance) on attribute $q$, given in \cite{palangetic2021fuzzy}, is
\begin{equation}
    \label{eq:triangular_dominance}
    \widetilde{R}_q^\gamma(u,v) = \max \left (\min \left (1 -\gamma \frac{v^{(q)} - u^{(q)}}{range(q)},1 \right ), 0 \right),
\end{equation}{}
where $\gamma$ is a positive parameter and $range(q)$ is the difference between the maximal and minimal value on $q$. The illustration of such relation is given in Figure \ref{fig:preorder_illustration}. On the figure, we have attribute $q$ with range equal to 1. The value of $v^{(q)}$ is $0.5$, while the value of $u^{(q)}$ goes from 0 to 1. For every $u^{(q)}$ and for $\gamma = 3$, the value of $\widetilde{R}_q^\gamma(u,v)$ is calculated and depicted.
\begin{figure}[H]
    \centering
    \includegraphics[width = .5\textwidth]{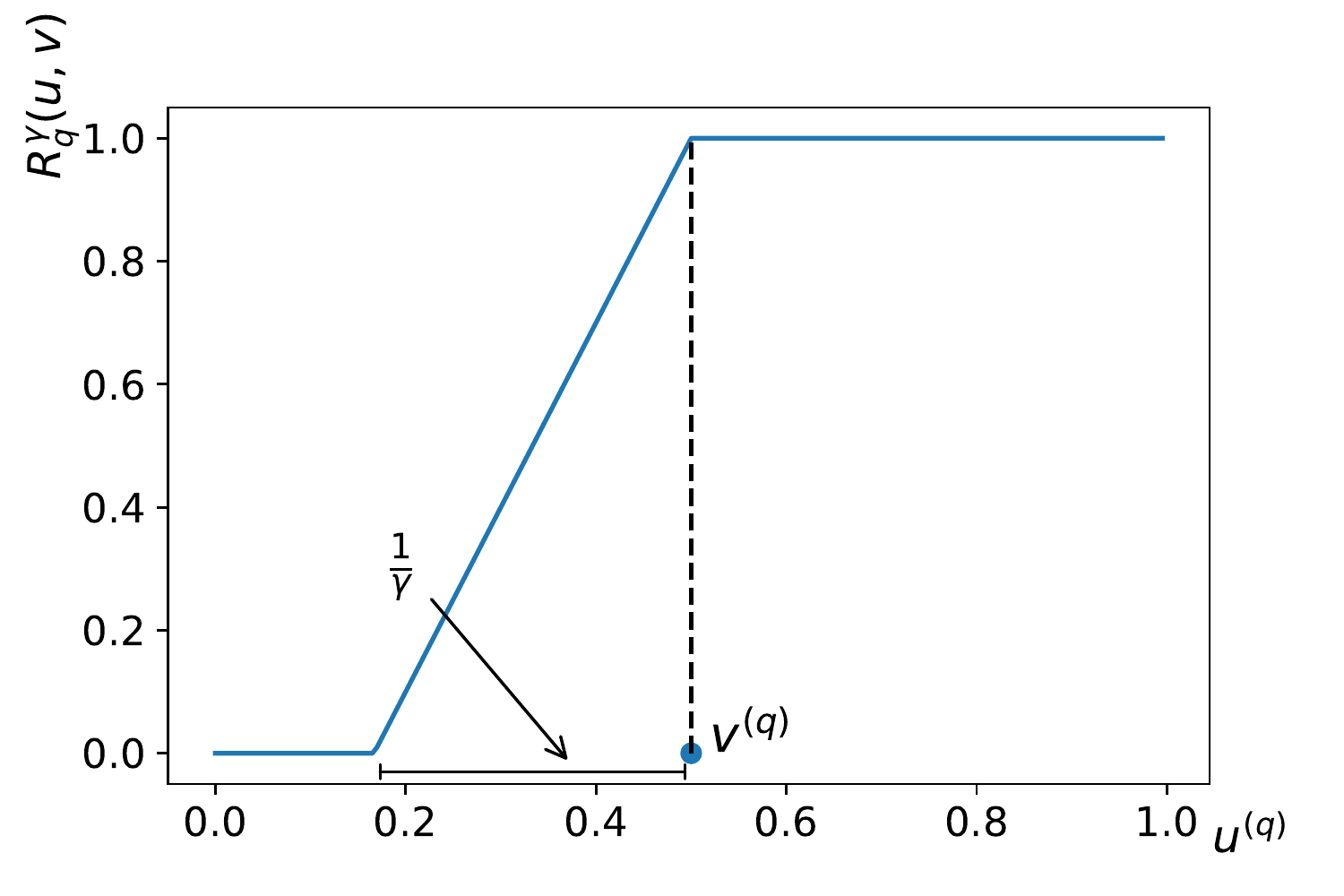}
    \caption{Illustration of the $T$-preorder relation on criterion $q$ for pair of objects $(u,v)$}
    \label{fig:preorder_illustration}
\end{figure}
An example of a $T_L$-equivalence relation (expressing indiscernibility) on the same attribute is
\begin{align}
\label{eq: triangular similarity}
   \widetilde{R}_q^{\gamma} (u,v) = \max \left ( 1 - \gamma \frac{|u^{(q)} - v^{(q)}|}{range(q)},0\right),
\end{align}
while the illustration of such relation is given in Figure \ref{fig:equivalence_illustration}. The description of the figure is same as for Figure \ref{fig:preorder_illustration}.
\begin{figure}[H]
    \centering
    \includegraphics[width = .5\textwidth]{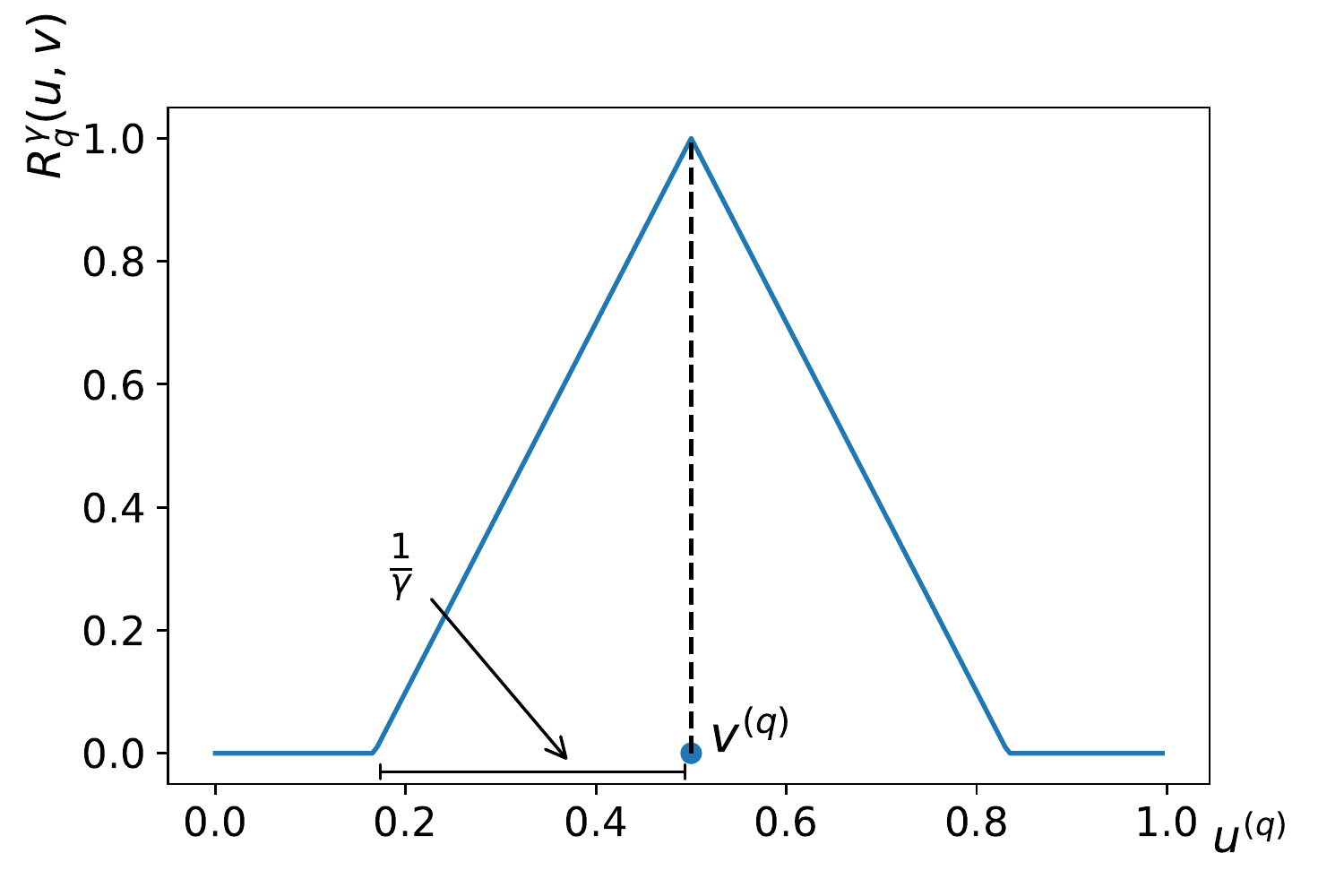}
    \caption{Illustration of the $T$-equivalence relation on criterion $q$ for pair of objects $(u,v)$}
    \label{fig:equivalence_illustration}
\end{figure}
In both cases, the relation over all attributes from $Q$ is defined as $\widetilde{R}(u,v) = \min_{q \in Q} \widetilde{R}_q(u,v)$.

\subsection{Fuzzy rough and granularly representable sets}
This subsection is based on \cite{palangetic2021granular}.
Let $U$ be a finite set of instances, $A$ a fuzzy set on $U$ and $\widetilde{R}$ a $T$-preorder relation on $U$. The fuzzy PRSA lower and upper approximations of $A$ are fuzzy sets for which the membership function is defined as:
\begin{equation}
\label{eq:fuzzy rough approx}
\begin{aligned}
\underline{\text{apr}}_{R}^{\min, I}(A)(u) = \min \{I(\widetilde{R}(v,u), A(v)); v \in U\} \\
\overline{\text{apr}}_{R}^{\max, T}(A)(u) = \max \{T(\widetilde{R}(u,v), A(v)); v \in U\}.
\end{aligned}
\end{equation}
For an IMTL $t$-norm and the corresponding implicator $I$ and negator $N$, we have the well known duality property.
\begin{align}
\label{eq:duality}
    \begin{split}
    N (\underline{\text{apr}}_{R}^{\min, I}(A)(u) ) &=  \overline{\text{apr}}_{R}^{\max, T}(coA)(u) \\ 
    N (\overline{\text{apr}}_{R}^{\max, T}(A) (u)) &=  
    \underline{\text{apr}}_{R}^{\min, I}(coA) (u)
    \end{split}
\end{align}

A fuzzy granule with respect to fuzzy relation $\widetilde{R}$ and parameter $\lambda \in [0,1]$ is defined as a parametric fuzzy set
\begin{align}
\label{eq:granule}
    \widetilde{R}^+_\lambda (u) = \{(v, T(\widetilde{R}(v,u), \lambda ); v \in U\}.
\end{align}
while the granule with respect to the inverse fuzzy relation $\widetilde{R}^{-1}$ is 
\begin{align}
\label{eq:granule2}
    \widetilde{R}^-_\lambda (u) = \{(v, T(\widetilde{R}(u,v), \lambda ); v \in U\}.
\end{align}
Here, parameter $\lambda$ describes the association of instance $u$ to a particular decision. For example, in classification problems, it represents the membership degree of $u$ to  a particular decision class. 
A fuzzy set $A$ is granularly representable (GR) w.r.t. relation $\widetilde{R}$ if 
$$
A = \bigcup \{ \widetilde{R}^+_{A(u)}(u); u \in U\},
$$
where the union is defined using $\max$ operator.
Some equivalent forms to define granular representability are such that for all $u,v \in U$:
\begin{equation}
\label{eq:granular_rep}
T(\widetilde{R}(v,u), A(u)) \leq A(v) \Leftrightarrow \widetilde{R}(v,u) \leq I(A(u), A(v)) 
\end{equation}

\begin{proposition} \cite{palangetic2021novel}
\label{prop:complement_representable}
If fuzzy set $A$ is granularly representable w.r.t. $T$-preorder relation $\Tilde{R}$, then $coA$ is granularly representable w.r.t.  $\Tilde{R}^{-1}$.
\end{proposition}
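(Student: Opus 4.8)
My plan is to bypass the union definition entirely and argue with the pointwise characterization (\ref{eq:granular_rep}). First note that $\widetilde{R}^{-1}$ is again a $T$-preorder (reflexivity is immediate, and $T$-transitivity follows from that of $\widetilde{R}$ together with commutativity of $T$), so (\ref{eq:granular_rep}) is applicable to the pair $(coA, \widetilde{R}^{-1})$. Since $\widetilde{R}^{-1}(v,u) = \widetilde{R}(u,v)$ and $coA(w) = N(A(w))$, granular representability of $coA$ w.r.t. $\widetilde{R}^{-1}$ is equivalent to the single family of inequalities
$$T\big(\widetilde{R}(u,v), N(A(u))\big) \leq N(A(v)), \qquad \forall u,v \in U.$$
So the entire proof reduces to deriving this from the hypothesis.

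From the granular representability of $A$ w.r.t. $\widetilde{R}$, applying (\ref{eq:granular_rep}) with the names $u$ and $v$ interchanged, I get, for all $u,v \in U$, the equivalent statements $T(\widetilde{R}(u,v), A(v)) \leq A(u)$ and $\widetilde{R}(u,v) \leq I(A(v),A(u))$. Now I chain three elementary steps. First, monotonicity of $T$ in its first argument gives $T(\widetilde{R}(u,v), N(A(u))) \leq T\big(I(A(v),A(u)), N(A(u))\big)$. Second, property (\ref{eq:implicator_negator_property}), namely $T(x,N(y)) \leq N(I(x,y))$, applied with $x = I(A(v),A(u))$ and $y = A(u)$, bounds this by $N\big(I(I(A(v),A(u)),A(u))\big)$. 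Third, $I(I(A(v),A(u)),A(u)) \geq A(v)$: this is the instance $x \leq I(I(x,y),y)$ of a standard residual-triplet identity, which is just the residuation rewriting of the modus-ponens inequality (\ref{eq:modus_ponens}) $T(x,I(x,y)) \leq y$. Since $N$ is non-increasing, this last fact turns $N\big(I(I(A(v),A(u)),A(u))\big)$ into an upper bound by $N(A(v))$, closing the chain.

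I do not anticipate a real obstacle: every step is one of the residual-triplet properties collected in the preliminaries, and in particular the argument uses no IMTL hypothesis --- only $I(I(x,y),y) \geq x$, never the stronger $I(I(x,y),y) = \max(x,y)$ from (\ref{eq:strong_max_definability}), nor involutivity of $N$. The one thing to be careful about is index bookkeeping: the hypothesis naturally speaks of $\widetilde{R}(v,u)$, whereas the goal for $coA$ under $\widetilde{R}^{-1}$ speaks of $\widetilde{R}(u,v)$, so the hypothesis must be invoked with the two variables swapped, and it is the residuation form $\widetilde{R}(u,v) \leq I(A(v),A(u))$ (rather than the $T$-form) that feeds smoothly into (\ref{eq:implicator_negator_property}). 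A more roundabout alternative would be to route through the duality (\ref{eq:duality}) between lower and upper fuzzy-rough approximations, but the direct computation above is shorter and avoids needing the extra structure that (\ref{eq:duality}) carries.
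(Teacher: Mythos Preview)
Your argument is correct. The paper itself does not supply a proof of this proposition --- it is quoted from \cite{palangetic2021novel} --- so there is no in-paper proof to compare against. Each step you use is a listed residual-triplet property: monotonicity of $T$, (\ref{eq:implicator_negator_property}), the consequence $x \leq I(I(x,y),y)$ of (\ref{eq:modus_ponens}) via residuation, and antitonicity of $N$; and you are right that no IMTL hypothesis is needed.

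For what it is worth, a shorter route using the paper's own toolbox is available via (\ref{eq:implicator_negator_property2}): from $T(\widetilde{R}(u,v),A(v)) \leq A(u)$ and antitonicity of $N$ one gets $N(A(u)) \leq N\big(T(\widetilde{R}(u,v),A(v))\big) = I\big(\widetilde{R}(u,v), N(A(v))\big)$, and then residuation gives $T(\widetilde{R}(u,v), N(A(u))) \leq N(A(v))$ in one stroke. This replaces your three-step chain by a single application of (\ref{eq:implicator_negator_property2}) and avoids the auxiliary inequality $x \leq I(I(x,y),y)$, but both arguments are equally valid and rely on the same left-continuity assumption on $T$.
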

\begin{proposition}
\label{prop:l_and_s_gp} \cite{palangetic2021granular}
It holds that $\underline{\text{apr}}_{\Tilde{R}}^{\min, I}(A)$ is the largest GR set contained in $A$, while $\overline{\text{apr}}_{\Tilde{R}}^{\max, T}(A)$ is the smallest GR set containing $A$.
\end{proposition}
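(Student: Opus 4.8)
The statement decomposes into two mirror-image halves. I will spell out the one for the lower approximation $\underline{B}:=\underline{\text{apr}}_{\widetilde{R}}^{\min,I}(A)$ and then indicate the parallel argument for $\overline{B}:=\overline{\text{apr}}_{\widetilde{R}}^{\max,T}(A)$. For the lower half I would verify three things in turn: (i) $\underline{B}\subseteq A$; (ii) $\underline{B}$ is granularly representable w.r.t. $\widetilde{R}$; (iii) any GR set $C$ with $C\subseteq A$ already satisfies $C\subseteq\underline{B}$. Together (i)--(iii) say precisely that $\underline{B}$ is the largest GR set contained in $A$. Throughout I would use the reformulation of granular representability from (\ref{eq:granular_rep}) — $B$ is GR iff $T(\widetilde{R}(v,u),B(u))\le B(v)$, equivalently $\widetilde{R}(v,u)\le I(B(u),B(v))$, for all $u,v\in U$ — together with the fact that, $U$ being finite, both $T$ and $I$ commute with $\min$ and $\max$ over $U$ (monotonicity plus attainment of the extremum).

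Claims (i) and (iii) are short. For (i), instantiate the defining minimum at the index $v=u$ and use reflexivity: $\underline{B}(u)\le I(\widetilde{R}(u,u),A(u))=I(1,A(u))=A(u)$, where $I(1,\cdot)$ is the identity by (\ref{eq:implicator_greater_second_parameter}), (\ref{eq:modus_ponens}) and neutrality of $1$ for $T$. For (iii), let $C$ be GR with $C\subseteq A$. Fixing $w\in U$, granular representability of $C$ gives $T(\widetilde{R}(w,u),C(u))\le C(w)\le A(w)$, and residuation rewrites this as $C(u)\le I(\widetilde{R}(w,u),A(w))$; taking the minimum over $w$ yields $C(u)\le\underline{B}(u)$.

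Claim (ii) is the real work, and the step I expect to be the main obstacle. Fix $u,v$ and an arbitrary $w\in U$. Since $\underline{B}(u)\le I(\widetilde{R}(w,u),A(w))$ and $T$ is non-decreasing, applying (\ref{eq:t_norm_implicator_property2}) gives
\[
T\bigl(\widetilde{R}(v,u),\underline{B}(u)\bigr)\ \le\ T\bigl(\widetilde{R}(v,u),I(\widetilde{R}(w,u),A(w))\bigr)\ \le\ I\bigl(I(\widetilde{R}(v,u),\widetilde{R}(w,u)),A(w)\bigr).
\]
Next, $T$-transitivity of $\widetilde{R}$ reads $T(\widetilde{R}(w,v),\widetilde{R}(v,u))\le\widetilde{R}(w,u)$, which by residuation says exactly $\widetilde{R}(w,v)\le I(\widetilde{R}(v,u),\widetilde{R}(w,u))$; since $I$ is non-increasing in its first argument, this upgrades the right-hand bound to $I(\widetilde{R}(w,v),A(w))$. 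Taking the minimum over $w$ now gives $T(\widetilde{R}(v,u),\underline{B}(u))\le\underline{B}(v)$, i.e. $\underline{B}$ is GR. The delicate points are recognising that (\ref{eq:t_norm_implicator_property2}) is the correct lever and that $T$-transitivity must be inserted via its residuated form, all while keeping the non-symmetric arguments of $\widetilde{R}$ in the right order — transposing a single one breaks the chain.

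The upper half is handled the same way, with $T$, $\max$ and the argument order of $\widetilde{R}$ taking over the roles of $I$, $\min$ and the transposed relation. Concretely: $A\subseteq\overline{B}$ by instantiating the maximum at $v=u$ and using $T(1,\cdot)=\mathrm{id}$; minimality holds since for GR $C\supseteq A$ one has $T(\widetilde{R}(u,w),A(w))\le T(\widetilde{R}(u,w),C(w))\le C(u)$ for every $w$ by monotonicity and granular representability of $C$, hence $\overline{B}(u)\le C(u)$; and $\overline{B}$ is GR because pulling $T$ through the maximum, applying associativity, and then $T$-transitivity $T(\widetilde{R}(v,u),\widetilde{R}(u,w))\le\widetilde{R}(v,w)$ gives $T(\widetilde{R}(v,u),\overline{B}(u))\le\max_{w}T(\widetilde{R}(v,w),A(w))=\overline{B}(v)$ — appreciably easier than (ii). In the IMTL case one could alternatively deduce the upper statement from the lower one via the duality (\ref{eq:duality}) and Proposition \ref{prop:complement_representable}, but the direct argument needs no extra assumptions. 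Apart from claim (ii), everything is routine bookkeeping with reflexivity, monotonicity, residuation, and the finiteness of $U$.
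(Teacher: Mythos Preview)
Your argument is correct in all parts: (i) and (iii) follow from reflexivity and residuation exactly as you say; (ii) is the substantive step and your chain using (\ref{eq:t_norm_implicator_property2}) together with the residuated form of $T$-transitivity is clean and correctly oriented; and the upper half goes through by associativity and $T$-transitivity as you indicate. Note, however, that the present paper does not supply its own proof of this proposition: it is quoted from \cite{palangetic2021granular} as a preliminary result, so there is no in-paper argument to compare against. Your proof is the standard direct verification one would expect and is self-contained with respect to the properties listed in Section~\ref{sec:preliminaries}.
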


\subsection{Granular approximations}
This subsection is based on \cite{palangetic2021novel}.
A granular approximation is a granularly representable set that is as close as possible to the observed fuzzy set (set that is approximated) with respect to the given closeness criterion. The closeness is measured by a loss function $L:\mathbb{R} \times \mathbb{R} \rightarrow \mathbb{R}^+$. For a given loss function $L$, fuzzy set $A$, relation $\widetilde{R}$ and residual triplet $(T, I, N)$, the granular approximation $\hat{A}$ is obtained as a result of the following optimization problem:
\begin{equation}
\label{eq:general_optimization}
\begin{aligned}
&\text{minimize}  && \displaystyle\sum_{u \in U}  L(A(u), \hat{A}(u)) \\
&\text{subject to}    && T( \widetilde{R}(u,v), \hat{A}(v)) \leq \hat{A}(u), \quad   u, v \in U\\
  &              &&0 \leq \hat{A}(u) \leq 1, \quad u \in U.
\end{aligned}
\end{equation}
The objective function in (\ref{eq:general_optimization}) ensures that the resulting fuzzy set $\hat{A}$ is as close as possible to the given fuzzy set $A$ (w.r.t. loss function $L$) while the constraints of (\ref{eq:general_optimization}) guarantee that $\hat{A}$ is granularly representable. We recall two well-known loss functions; $p$-quantile loss:
\begin{equation}
    L_{p} = (y, \hat{y}) = (y - \hat{y}) (p - \mathbf{1}_{y-\hat{y} < 0}) = \begin{cases}
        p|y - \hat{y}| & \text{if   }  y-\hat{y} > 0, \\
        (1-p)|y - \hat{y}| & \text{otherwise},
        \end{cases} \label{eq:quantile_loss}
\end{equation}
and mean squared error:
\begin{equation}
L_{MSE}(y, \hat{y}) = (y - \hat{y})^2 \label{eq:mse}.
\end{equation}
The $p$-quantile loss for $p=\frac{1}{2}$ is called mean absolute error:
\begin{equation}
    L_{MSE}(y, \hat{y}) = |y - \hat{y}| \label{eq:mae}
\end{equation}

It was shown that optimization problem (\ref{eq:general_optimization}) can be efficiently solved if $T$ is isomorphic to $T_L$ ($T = T_{L,\varphi}$) and for $L$ being the scaled $p$-quantile loss: $L_{p,\varphi} = L_p(\varphi(y), \varphi(\hat{y}))$ or the scaled mean squared error: $L_{MSE,\varphi} = L_{MSE}(\varphi(y), \varphi(\hat{y}))$.

\begin{definition}
Loss function $L$ is \textit{symmetric} if $L(y, \hat{y}) = L(\hat{y}, y)$.
\end{definition}
It is easy to verify that $L_{MSE,\varphi}$ and $L_{MAE, \varphi}$ are symmetric loss functions, while $L_{p, \varphi}$ for $p \neq \frac{1}{2}$ is not. However, it can be observed that $L_{p, \varphi}(y, \hat{y}) = L_{1 - p, \varphi}(\hat{y}, y)$.

\begin{definition}
\label{def:v-type}
We say that loss function $L$ is \textit{of $\lor$-type} if for any real number $a$, it holds that
\begin{itemize}
    \item $L(a,a) = 0$,
    \item functions $L(x,a)$ and $L(a,x)$ are increasing for $x > a$ and
    \item functions $L(x,a)$ and $L(a,x)$ are decreasing for $x < a$.
\end{itemize}
\end{definition}
The previous definition says that the loss is greater if $x$ is more distant from $a$. It is easy to verify that the mean squared error and $p$-quantile loss for $p \in (0,1)$ are of $\lor$-type. The $p$-quantile loss for $p \in \{0, 1\}$ is not of $\lor$-type since $L_0(a,x) = 0$ for $x < a$ and $L_1(a, x) = 0$ for $x > a$. 

\begin{definition}
\label{def:perserving_duality}
A loss function $L:[0,1] \times [0,1] \rightarrow \mathbb{R}^+$ is \textit{$N$-duality preserving} if $L(y, \hat{y}) = L(N(\hat{y}), N(y))$ for $N$ from the residual triplet $(T, I, N)$.
\end{definition}
In \cite{palangetic2021novel}, it was shown that both $L_{p, \varphi}$ and $L_{MSE,\varphi}$ are $N$-duality preserving for IMTL triplet $(T_{L, \varphi}, I_{L, \varphi}, N_{L, \varphi})$.

\section{Disjoint and adjacent granules}
\label{sec:disjoint_and_adjacent_granules}
\subsection{Definitions and basic properties}
Note that from now on we assume that $\widetilde{R}$ is a $T$-preorder relation for a residual triplet $(T, I, N)$. 

\begin{definition}
Two fuzzy sets $A$ and $B$, defined on universe $U$, are called $T$-disjoint if 
$$
T(A(u), B(u)) = 0 \text{ for every } u \in U.
$$
\end{definition}
For the fuzzy granules, we have the following property:
\begin{proposition}
\label{prop: t disjoint}
Let $u,v \in U$. Two fuzzy granules $\widetilde{R}^+_{\lambda_1}(u)$ and $\widetilde{R}^-_{\lambda_2}(v)$ are $T$-disjoint if and only if
\begin{equation}
T(\lambda_1, \lambda_2) \leq N(\widetilde{R}(v,u)).   \label{eq:T-disjoint_condition-initial} 
\end{equation}
\end{proposition}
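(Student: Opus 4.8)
The plan is to evaluate both granules at an arbitrary point of $U$, flatten the nested $t$-norms, and then peel off $\lambda_1,\lambda_2$ so that $T$-disjointness reduces to the single scalar inequality (\ref{eq:T-disjoint_condition-initial}).

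First I would record the memberships. For $w\in U$, definitions (\ref{eq:granule}) and (\ref{eq:granule2}) give $\widetilde{R}^+_{\lambda_1}(u)(w)=T(\widetilde{R}(w,u),\lambda_1)$ and $\widetilde{R}^-_{\lambda_2}(v)(w)=T(\widetilde{R}(v,w),\lambda_2)$. Hence, by commutativity and associativity of $T$, the two granules are $T$-disjoint if and only if
$$
T(\lambda_1,\lambda_2,\widetilde{R}(w,u),\widetilde{R}(v,w))=0 \qquad\text{for every }w\in U .
$$

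The core of the argument is that this whole family of conditions collapses to the single equality $T(\lambda_1,\lambda_2,\widetilde{R}(v,u))=0$. For the forward implication I would simply take $w=u$ and use reflexivity of $\widetilde{R}$ together with $T(x,1)=x$. For the converse I would use $T$-transitivity of $\widetilde{R}$ (and commutativity of $T$), which gives $T(\widetilde{R}(w,u),\widetilde{R}(v,w))\le\widetilde{R}(v,u)$ for every $w$, and then monotonicity of $T$ in its last argument to conclude $T(\lambda_1,\lambda_2,\widetilde{R}(w,u),\widetilde{R}(v,w))\le T(\lambda_1,\lambda_2,\widetilde{R}(v,u))=0$. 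This is the step where the $T$-preorder hypothesis on $\widetilde{R}$ is genuinely used, and I expect it to be the main (if still short) obstacle; everything else is routine manipulation of the connectives.

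Finally, writing $a=T(\lambda_1,\lambda_2)$ and $b=\widetilde{R}(v,u)$, it remains to invoke the standard equivalence $T(a,b)=0\Leftrightarrow a\le N(b)$: one direction follows from the residuation property (equivalently, from the supremum definition of the R-implicator, since $T(a,b)=0$ places $a$ in the set over which the supremum defining $I(b,0)=N(b)$ is taken), and the other from property (\ref{eq:modus_ponens}) applied with $x=b$, $y=0$, which yields $T(b,N(b))\le 0$ and hence $T(a,b)\le T(N(b),b)=0$ whenever $a\le N(b)$. Instantiating this equivalence gives $T(\lambda_1,\lambda_2)\le N(\widetilde{R}(v,u))$, i.e. (\ref{eq:T-disjoint_condition-initial}); since every step above is an equivalence, reading the chain backwards delivers the reverse implication, and the proof is complete.
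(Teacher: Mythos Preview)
Your proof is correct and follows essentially the same route as the paper: collapse the family of conditions indexed by $w$ to the single condition $T(\lambda_1,\lambda_2,\widetilde{R}(v,u))=0$ using reflexivity and $T$-transitivity, then convert this to the inequality $T(\lambda_1,\lambda_2)\le N(\widetilde{R}(v,u))$ via residuation. The only cosmetic difference is that the paper packages the first step as a $\max_w$ computation (invoking left-continuity of $T$ to pull the max inside), whereas you argue the two implications separately; also, your final equivalence can be obtained in one stroke from the residuation property $T(a,b)\le 0\Leftrightarrow a\le I(b,0)$ without the detour through (\ref{eq:modus_ponens}).
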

\begin{proof}
The statement that two granules are $T$-disjoint is equivalent to:
\begin{align*}
&\max_{w \in U} T(T(\widetilde{R}(w, u), \lambda_1), T(\widetilde{R}(v, w), \lambda_2)) =  0 \\
\Leftrightarrow & \max_{w \in U} T(T(\widetilde{R}(v, w), \widetilde{R}(w, u)) , T(\lambda_1, \lambda_2)) = 0 \\
\Leftrightarrow & T\left( \max_{w \in U} T(\widetilde{R}(v, w), \widetilde{R}(w, u)), T(\lambda_1, \lambda_2)\right) = 0 \\
\Leftrightarrow & T( \widetilde{R}(u, v), T(\lambda_1, \lambda_2)) = 0 \\
\Leftrightarrow & T(\lambda_1, \lambda_2) \le I(\widetilde{R}(v, u),0) \\
\Leftrightarrow & T(\lambda_1, \lambda_2) \leq N(\widetilde{R}(v,u)).
\end{align*}
The first equivalence holds because of the commutativity and associativity of $T$. The second one holds because $T$ is left-continuous. The third one is a consequence of the $T$-transitivity of $\widetilde{R}$ while the fourth equivalence follows from the residuation property. 
\end{proof}
Please note that the $T$-disjointness is characterised by the above proposition only for granules of opposite types, i.e., granules w.r.t. relations $\widetilde{R}$ and $\widetilde{R}^{-1}$ respectively. 

\begin{proposition}
\label{prop:disjoint_complement}
Let $(T, I, N)$ be an IMTL triplet. Then, fuzzy set $A$ is granularly representable w.r.t. $\widetilde{R}$ if and only if the granules from $A$ (w.r.t. $\widetilde{R}$) and $coA$ (w.r.t. $\widetilde{R}^{-1}$) are disjoint.
\end{proposition}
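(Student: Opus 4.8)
The plan is to reduce the statement to Proposition \ref{prop: t disjoint} and then simplify the resulting inequality using the involutivity of $N$ afforded by the IMTL hypothesis. First I would unfold both sides of the claimed equivalence. By (\ref{eq:granular_rep}), ``$A$ is granularly representable w.r.t. $\widetilde{R}$'' means exactly that $\widetilde{R}(v,u) \leq I(A(u),A(v))$ for all $u,v \in U$. On the other side, the granules of $A$ w.r.t. $\widetilde{R}$ are the fuzzy sets $\widetilde{R}^+_{A(u)}(u)$, $u \in U$, and the granules of $coA$ w.r.t. $\widetilde{R}^{-1}$ are $\widetilde{R}^-_{coA(v)}(v) = \widetilde{R}^-_{N(A(v))}(v)$, $v \in U$, since $coA(v) = N(A(v))$. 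I would record at this point that, because $U$ is finite and $T$ is non-decreasing (hence commutes with finite maxima), the union of the first family being $T$-disjoint from the union of the second family is the same thing as every pair $\widetilde{R}^+_{A(u)}(u)$, $\widetilde{R}^-_{N(A(v))}(v)$ being $T$-disjoint.

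The second step is a direct application of Proposition \ref{prop: t disjoint} with $\lambda_1 = A(u)$ and $\lambda_2 = N(A(v))$: such a pair of granules is $T$-disjoint precisely when $T(A(u), N(A(v))) \leq N(\widetilde{R}(v,u))$. Thus the ``disjointness'' side of the equivalence is the statement that this inequality holds for all $u,v \in U$, and it only remains to show that this system of inequalities is equivalent to the granular-representability system $\widetilde{R}(v,u) \leq I(A(u),A(v))$, $u,v \in U$.

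For that last step I would use that $(T,I,N)$ is an IMTL triplet, so $N$ is an involutive, order-reversing bijection of $[0,1]$. Applying $N$ to both sides of $T(A(u),N(A(v))) \leq N(\widetilde{R}(v,u))$ and using involutivity turns it into $\widetilde{R}(v,u) \leq N(T(A(u),N(A(v))))$; then property (\ref{eq:implicator_negator_property2}), i.e. $N(T(x,y)) = I(x,N(y))$, together with $N(N(A(v))) = A(v)$, rewrites the right-hand side as $I(A(u),A(v))$. Chaining these equivalences over all $u,v \in U$ yields the claim.

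The argument is short, so there is no serious obstacle beyond bookkeeping; the two places that genuinely need the IMTL hypothesis are the passage to the contrapositive via $N$ and the collapse of $N(N(A(v)))$, and one should be careful to read ``the granules are disjoint'' as pairwise $T$-disjointness of the two families, which is the form in which Proposition \ref{prop: t disjoint} applies and which (by finiteness of $U$ and monotonicity of $T$) coincides with $T$-disjointness of the two unions.
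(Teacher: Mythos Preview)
Your proof is correct and follows essentially the same route as the paper: both reduce the claim to the $T$-disjointness criterion of Proposition~\ref{prop: t disjoint} and then pass between $T(A(u),N(A(v)))\le N(\widetilde{R}(v,u))$ and $\widetilde{R}(v,u)\le I(A(u),A(v))$ via the involutivity of $N$ and the identity relating $N\circ T$ with $I$. The paper phrases the last step through property~(\ref{eq:implicator_negator_property}) (which becomes an equality under IMTL), while you use~(\ref{eq:implicator_negator_property2}); these are equivalent manipulations, and your added remark that pairwise $T$-disjointness coincides with $T$-disjointness of the unions is a useful clarification the paper leaves implicit.
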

\begin{proof}
We have the following equivalences.
\begin{align*}
    A(u) \geq T(\widetilde{R}(u,v), A(v)) &\Leftrightarrow \widetilde{R}(u,v) \leq I(A(v), A(u)) \\
    & \Leftrightarrow N(\widetilde{R}(u,v)) \geq N(I(A(v), A(u))) \\
    & \Leftrightarrow N(\widetilde{R}(u,v)) \geq T(A(u), N(A(v))). \\
\end{align*}
The last equivalence holds because of (\ref{eq:implicator_negator_property}), while the second one follows from the fact that $N$ is a decreasing function.
The equivalences state that the granular representability of $A$ is equivalent to the $T$-disjointness condition of granules from $A$ and $coA$, as formulated in Proposition \ref{prop: t disjoint}.
\end{proof}

\begin{corollary}
Let $(T, I, N)$ be an IMTL triplet. The granules from $\underline{\text{apr}}_{R}^{\min, I}(A)$ and $\overline{\text{apr}}_{R}^{\max, T}(coA)$ are disjoint (analogously, the granules from $\underline{\text{apr}}_{R}^{\min, I}(coA)$ and $\overline{\text{apr}}_{R}^{\max, T}(A)$ are disjoint too).
\end{corollary}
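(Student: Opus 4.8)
The plan is to apply Proposition \ref{prop:disjoint_complement} twice, once to each of the known granularly representable sets, together with the duality property (\ref{eq:duality}). First I would recall from Proposition \ref{prop:l_and_s_gp} that $\underline{\text{apr}}_{R}^{\min, I}(A)$ is granularly representable w.r.t. $\widetilde{R}$. Then Proposition \ref{prop:disjoint_complement} immediately tells us that the granules from $\underline{\text{apr}}_{R}^{\min, I}(A)$ (w.r.t. $\widetilde{R}$) and $co\left(\underline{\text{apr}}_{R}^{\min, I}(A)\right)$ (w.r.t. $\widetilde{R}^{-1}$) are disjoint. The only remaining task is to identify $co\left(\underline{\text{apr}}_{R}^{\min, I}(A)\right)$ with $\overline{\text{apr}}_{R}^{\max, T}(coA)$, which is exactly the first line of the duality property (\ref{eq:duality}), valid since $(T, I, N)$ is an IMTL triplet. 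This settles the first claim.

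For the parenthetical second claim, I would argue symmetrically: by Proposition \ref{prop:l_and_s_gp}, $\underline{\text{apr}}_{R}^{\min, I}(coA)$ is granularly representable w.r.t. $\widetilde{R}$, so by Proposition \ref{prop:disjoint_complement} its granules and those of $co\left(\underline{\text{apr}}_{R}^{\min, I}(coA)\right)$ (w.r.t. $\widetilde{R}^{-1}$) are disjoint. Applying the second line of (\ref{eq:duality}) with $A$ replaced by $coA$ (or equivalently using involutivity of $N$, so that $co(coA) = A$), we get $co\left(\underline{\text{apr}}_{R}^{\min, I}(coA)\right) = \overline{\text{apr}}_{R}^{\max, T}(A)$, which gives the stated disjointness. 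Alternatively, one can simply note that the roles of $A$ and $coA$ are interchangeable and the second statement is the first statement applied to $coA$.

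I do not expect any real obstacle here; the corollary is a direct composition of three already-established facts (Propositions \ref{prop:l_and_s_gp} and \ref{prop:disjoint_complement}, plus the duality (\ref{eq:duality})). The only point requiring a small amount of care is making sure the relation on the \emph{complement} side is $\widetilde{R}^{-1}$ rather than $\widetilde{R}$ — this matches both the statement of Proposition \ref{prop:disjoint_complement} and the convention in Proposition \ref{prop:complement_representable}, so it is consistent. One should also briefly mention that the IMTL hypothesis is what licenses the use of (\ref{eq:duality}) and of Proposition \ref{prop:disjoint_complement}, both of which were stated under that assumption.
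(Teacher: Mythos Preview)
Your proposal is correct and follows essentially the same approach as the paper, which simply states that the result holds from the duality property (\ref{eq:duality}); you have just made explicit the two supporting ingredients (Propositions \ref{prop:l_and_s_gp} and \ref{prop:disjoint_complement}) that the paper leaves implicit.
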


\begin{proof}
The result holds from the duality property of the lower and upper approximations (\ref{eq:duality}).
\end{proof}

Next, we examine a pair of granules $\widetilde{R}^+_{\lambda_1}$ and $\widetilde{R}^-_{\lambda_2}$ that are not only disjoint, but are adjacent to each other. In other words, if their parameters are $\lambda_1$ and $\lambda_2$, then adding any $\epsilon$ to either $\lambda_1$ or $\lambda_2$ will cause the granules to overlap. For fixed $\lambda_1$, the largest $\lambda_2$ for which the granules are still disjoint is:
$$
\lambda^{\max}_2 = \sup \{\lambda; T(\lambda_1, \lambda) \leq N(\widetilde{R}(v,u))\} = I(\lambda_1, N(\widetilde{R}(v,u))).
$$
Obviously, 
$$
T(\lambda_1, \lambda^{\max}_2) = T(\lambda_1, I(\lambda_1, N(\widetilde{R}(v,u)))) \leq N(\widetilde{R}(v,u))
$$ 
due to the modus ponens property (\ref{eq:modus_ponens}). 
\begin{definition}
\label{def:adjecency_relation}
Granule $\widetilde{R}^-_{\lambda_2}(v)$ is adjacent to granule $\widetilde{R}^+_{\lambda_1}(u)$ if
$$
\lambda_1 = I(\lambda_2, N(\widetilde{R}(v,u))),
$$
while $\widetilde{R}^+_{\lambda_1}(u)$ is adjacent to $\widetilde{R}^-_{\lambda_2}(v)$ if 
$$
\lambda_2 = I(\lambda_1, N(\widetilde{R}(v,u))).
$$
We call such defined relationship among granules the adjacency relation.
\end{definition}
\begin{proposition}
\label{prop:adj_to_1}
Every granule is adjacent to all granules with parameter 1, under the assumption that they are disjoint.
\end{proposition}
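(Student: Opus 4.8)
The plan is to unwind both the disjointness characterisation from Proposition \ref{prop: t disjoint} and the adjacency condition from Definition \ref{def:adjecency_relation} in the special case where one of the two parameters equals $1$, and simply observe that the two conditions collapse to the same inequality. Since the adjacency relation is only defined between granules of opposite type, I would phrase the statement as: given an arbitrary granule and a granule with parameter $1$ of the opposite type, the former is adjacent to the latter whenever they are disjoint. By symmetry of the two configurations it suffices to treat one of them carefully.

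First I would fix granules $\widetilde{R}^+_{\lambda_1}(u)$ and $\widetilde{R}^-_{1}(v)$ and assume they are $T$-disjoint. By Proposition \ref{prop: t disjoint}, $T$-disjointness is equivalent to $T(\lambda_1, 1) \leq N(\widetilde{R}(v,u))$. Because $1$ is the neutral element of the $t$-norm $T$, we have $T(\lambda_1, 1) = \lambda_1$, so the disjointness assumption is just
$$
\lambda_1 \leq N(\widetilde{R}(v,u)).
$$

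Next I would compare this with adjacency. By Definition \ref{def:adjecency_relation}, $\widetilde{R}^+_{\lambda_1}(u)$ is adjacent to $\widetilde{R}^-_{1}(v)$ precisely when $1 = I(\lambda_1, N(\widetilde{R}(v,u)))$. By the ordering property (\ref{eq:ordering_property}), $I(x,y) = 1 \Leftrightarrow x \leq y$, hence this adjacency condition is equivalent to $\lambda_1 \leq N(\widetilde{R}(v,u))$ — exactly the inequality obtained from disjointness. Therefore disjointness implies (indeed, in this case, is equivalent to) adjacency. The symmetric configuration, a granule $\widetilde{R}^-_{\lambda_2}(v)$ together with $\widetilde{R}^+_{1}(u)$, is handled in the same way: disjointness reads $T(1,\lambda_2) = \lambda_2 \leq N(\widetilde{R}(v,u))$, and by Definition \ref{def:adjecency_relation} and (\ref{eq:ordering_property}) adjacency of $\widetilde{R}^-_{\lambda_2}(v)$ to $\widetilde{R}^+_{1}(u)$ reads $1 = I(\lambda_2, N(\widetilde{R}(v,u))) \Leftrightarrow \lambda_2 \leq N(\widetilde{R}(v,u))$, which is the same condition.

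I do not expect any real obstacle here: the whole argument rests on the neutrality of $1$ for $T$ and on the ordering property of the residual implicator, both available from the preliminaries. The only point requiring a word of care is purely notational — the adjacency relation is defined only between a $\widetilde{R}^+$-granule and a $\widetilde{R}^-$-granule, so the phrase ``granule with parameter $1$'' in the statement must be read as a granule of whichever of the two types is opposite to the given granule.
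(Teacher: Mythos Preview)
Your proof is correct and follows essentially the same route as the paper: the paper sets $\lambda_1 = 1$, applies the residuation property to the disjointness condition $T(1,\lambda_2)\leq N(\widetilde{R}(v,u))$ to obtain $1\leq I(\lambda_2, N(\widetilde{R}(v,u)))$, and hence $\lambda_1 = 1 = I(\lambda_2, N(\widetilde{R}(v,u)))$, which is precisely the adjacency condition. Your version simplifies $T(\cdot,1)$ first and then invokes the ordering property (\ref{eq:ordering_property}) rather than residuation directly, and you spell out both symmetric configurations, but the argument is the same in substance.
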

\begin{proof}
If $\lambda_1 = 1$, from the $T$-disjointness property we have:
$$
T(1, \lambda_2) \leq N(\widetilde{R}(v,u)) \Leftrightarrow 1 \leq I(\lambda_2, N(\widetilde{R}(v,u))) \Rightarrow 1 = I(\lambda_2, N(\widetilde{R}(v,u))).
$$
\end{proof}
From the proof of Proposition \ref{prop:adj_to_1}, we may conclude that granule $\widetilde{R}^+_{\lambda_1}(u)$ for $\lambda_1 = 1$ is adjacent to $\widetilde{R}^-_{\lambda_2}(v)$ if and only if $\lambda_2 = N(\widetilde{R}(v,u))$.

The previous reasoning also reveals that the adjacency relation is not necessarily symmetric.
\begin{proposition}
For parameters $\lambda_1$ and $\lambda_2$ that are smaller than 1 and for continuous $t$-norm $T$ from the IMTL triplet $(T, I, N)$, we have that the adjacency relation is symmetric. In other words, if $\lambda_1 = I(\lambda_2, N(\widetilde{R}(v,u)))$, then also $\lambda_2 = I(\lambda_1, N(\widetilde{R}(v,u)))$.
\end{proposition}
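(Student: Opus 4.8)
To lighten the notation write $n := N(\widetilde{R}(v,u))$, so that the hypothesis reads $\lambda_1 = I(\lambda_2, n)$ with $\lambda_1, \lambda_2 \in [0,1)$, and the goal is to show $\lambda_2 = I(\lambda_1, n)$. The plan is to reduce everything to a single application of the strong max-definability identity (\ref{eq:strong_max_definability}), which is available precisely because $T$ is a continuous IMTL $t$-norm and hence isomorphic to $T_L$.

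First I would pin down the order relation between $\lambda_2$ and $n$. Since $\lambda_1 = I(\lambda_2, n)$ and $\lambda_1 < 1$, the ordering property (\ref{eq:ordering_property}) rules out $\lambda_2 \le n$; therefore $\lambda_2 > n$, and in particular $\max(\lambda_2, n) = \lambda_2$. This is the only place where the hypothesis $\lambda_1 < 1$ is genuinely used, and it is exactly what separates this case from Proposition~\ref{prop:adj_to_1} (if $\lambda_1 = 1$ one may have $\lambda_2 \le n$, and then $I(\lambda_1,n) = n$ need not equal $\lambda_2$).

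Next I would invoke strong max-definability. Because the continuous IMTL $t$-norm $T$ is isomorphic to $T_L$, identity (\ref{eq:strong_max_definability}) gives $\max(\lambda_2, n) = I\big(I(\lambda_2, n),\, n\big)$. Substituting $I(\lambda_2,n) = \lambda_1$ on the right-hand side and $\max(\lambda_2,n) = \lambda_2$ on the left-hand side (from the previous step) yields $\lambda_2 = I(\lambda_1, n)$, which is precisely the claimed symmetry of the adjacency relation. As a sanity check one can run the same computation in the Łukasiewicz model via the order isomorphism $\varphi$ with $T = T_{L,\varphi}$: applying $\varphi$ to $\lambda_1 = I(\lambda_2,n)$ gives $\varphi(\lambda_1) = \min(1, 1 - \varphi(\lambda_2) + \varphi(n))$, which forces $\varphi(n) = \varphi(\lambda_1) + \varphi(\lambda_2) - 1$ since $\varphi(\lambda_1) < 1$, and then $I_L(\varphi(\lambda_1),\varphi(n)) = \varphi(\lambda_2)$ by direct substitution; transporting back through $\varphi^{-1}$ recovers $\lambda_2 = I(\lambda_1,n)$.

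I do not expect a real obstacle here: there is no heavy computation, and the whole argument is two lines once one notices that strong max-definability is the right tool. The only point requiring a little care is the strict-versus-non-strict inequality $\lambda_2 > n$, which must be extracted from $\lambda_1 < 1$ through the ordering property rather than assumed; everything else is a substitution.
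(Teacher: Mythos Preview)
Your proof is correct and follows exactly the paper's argument: first use the ordering property (\ref{eq:ordering_property}) together with $\lambda_1<1$ to force $\lambda_2 > N(\widetilde{R}(v,u))$, then apply strong max-definability (\ref{eq:strong_max_definability}) to obtain $\lambda_2 = I(I(\lambda_2,n),n) = I(\lambda_1,n)$. Your additional sanity check via the {\L}ukasiewicz model is not in the paper but is harmless, and your observation that only the hypothesis $\lambda_1<1$ (not $\lambda_2<1$) is actually used is accurate.
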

\begin{proof}
Ordering property (\ref{eq:ordering_property}) implies that if $\lambda_1 < 1$, then also $\lambda_2 > N(\widetilde{R}(v,u))$.
Using the strong max-definability (\ref{eq:strong_max_definability}), we have that
$$
\lambda_2 = I(I(\lambda_2, N(\widetilde{R}(v,u))), N(\widetilde{R}(v,u))) = I(\lambda_1, N(\widetilde{R}(v,u))).
$$
\end{proof}
\begin{figure}[!htb]
    \centering
    
    \includegraphics[width = .48\textwidth]{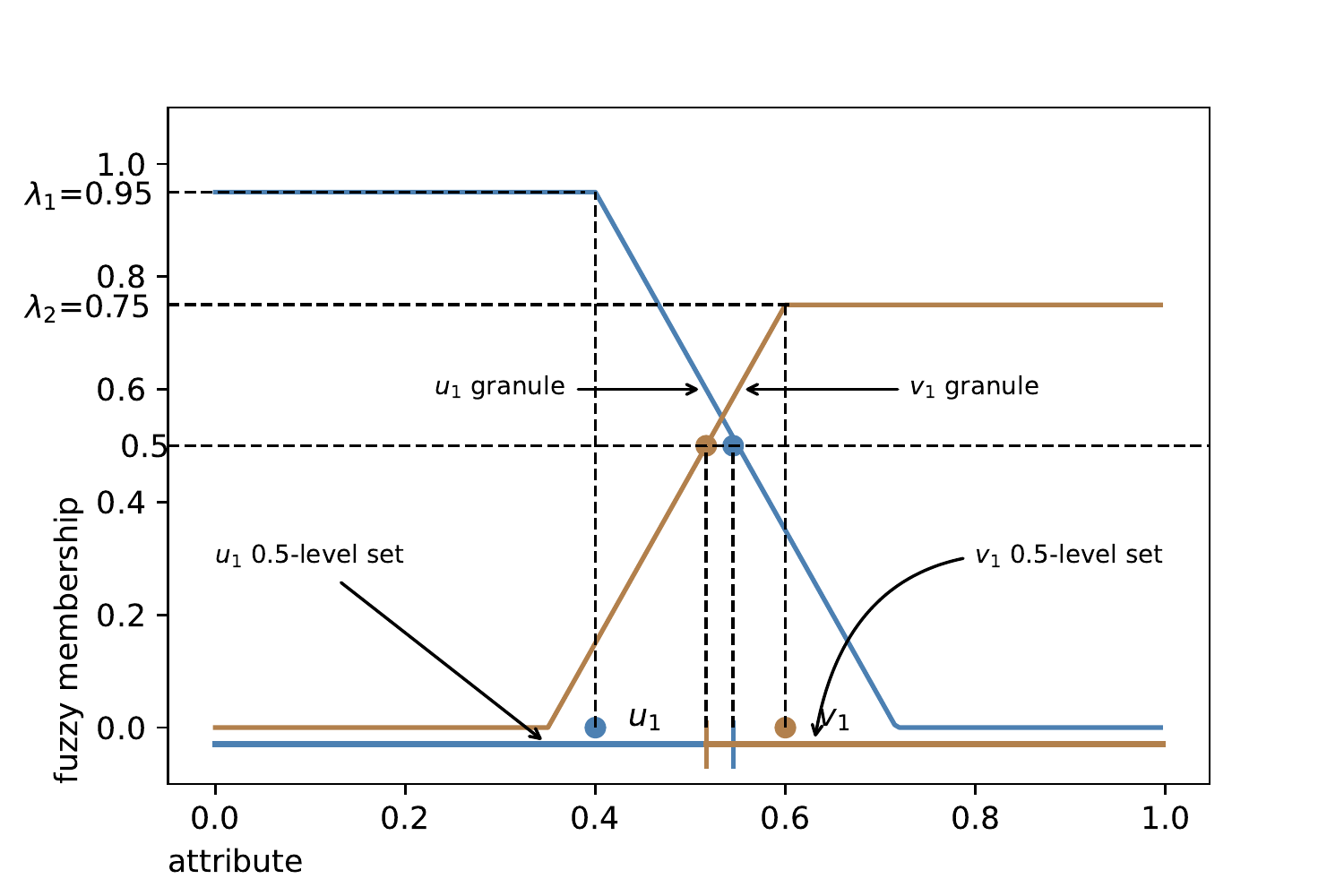}
    \includegraphics[width = .48\textwidth]{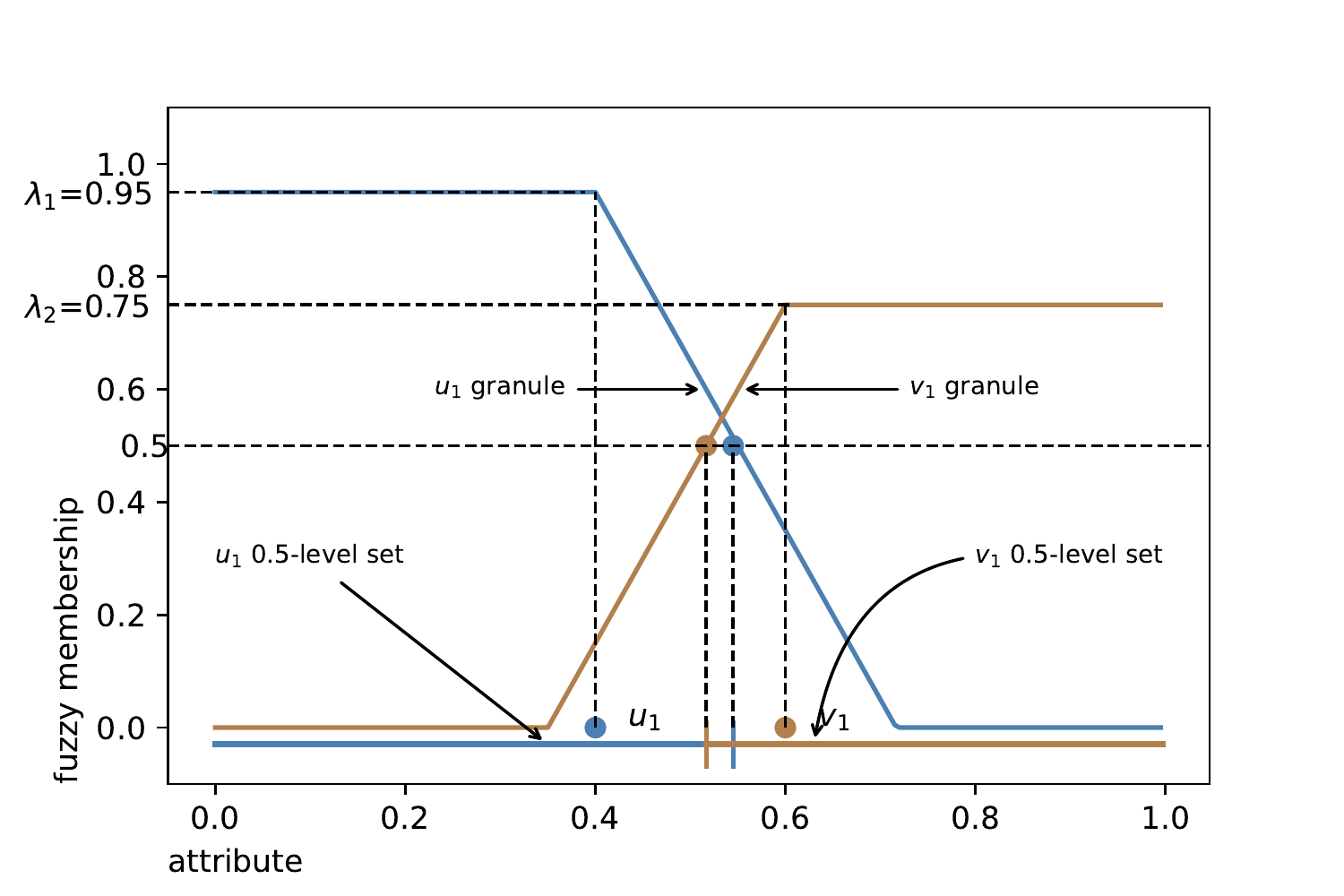}
    \includegraphics[width = .48\textwidth]{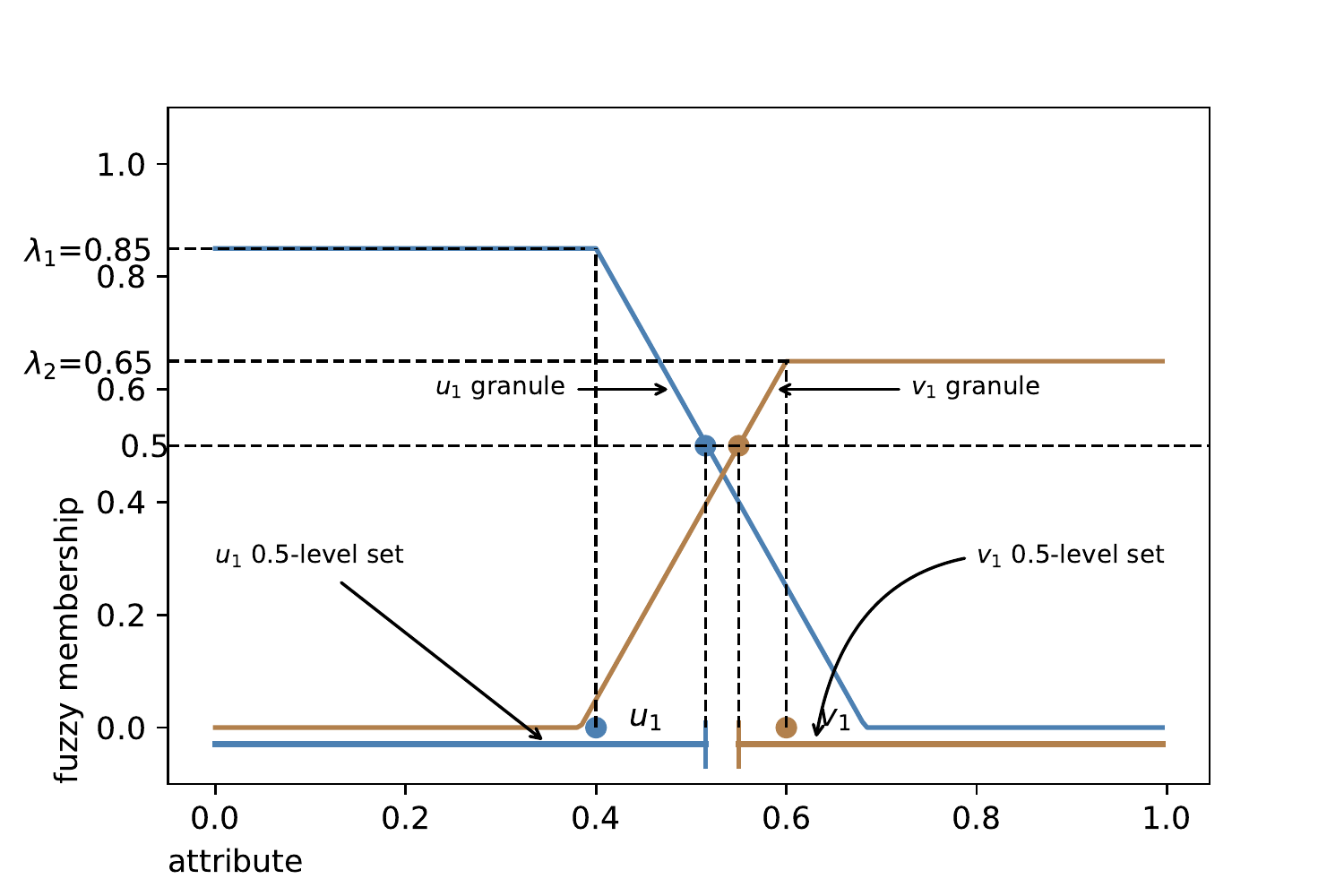}
    \includegraphics[width = .48\textwidth]{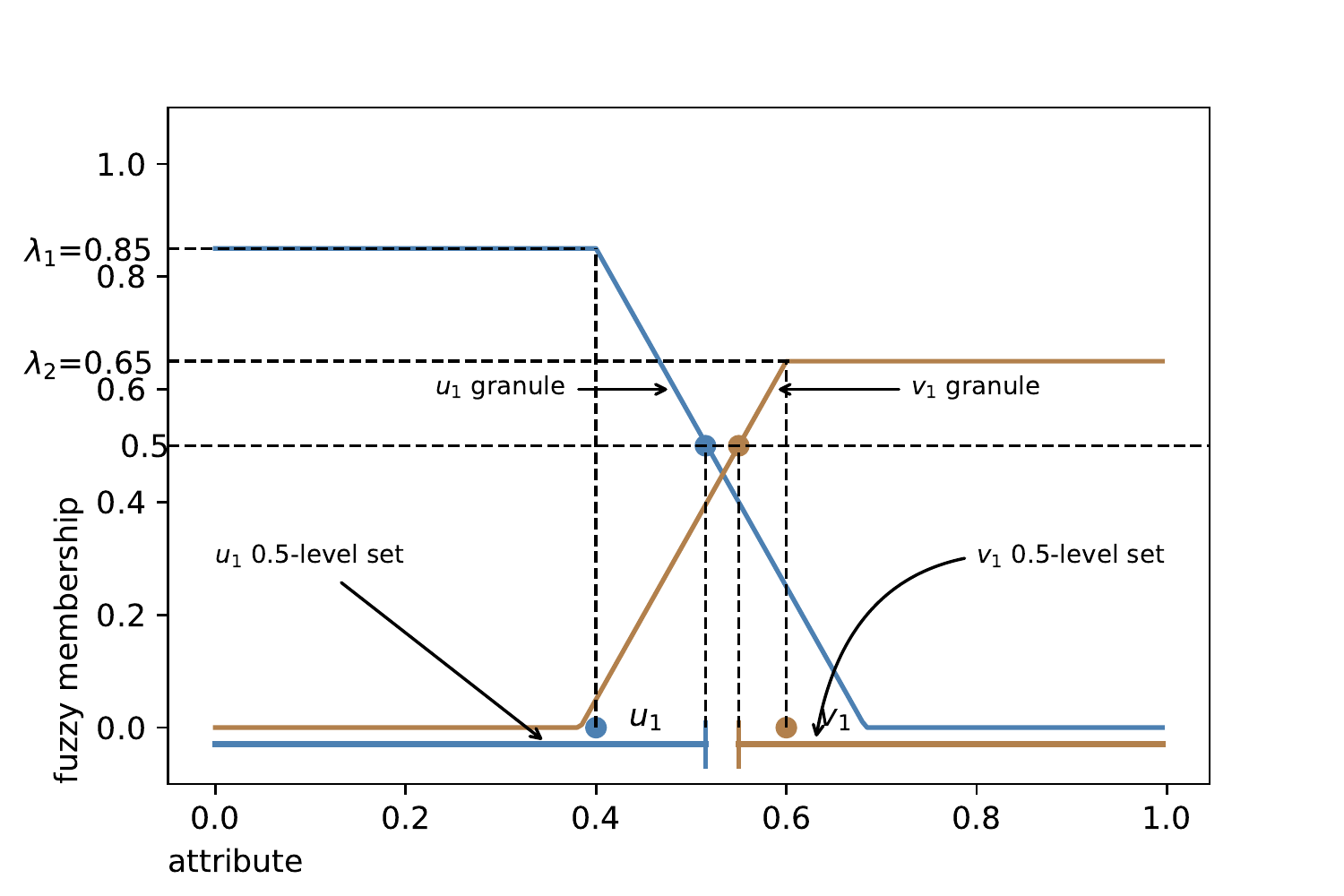}
    \includegraphics[width = .48\textwidth]{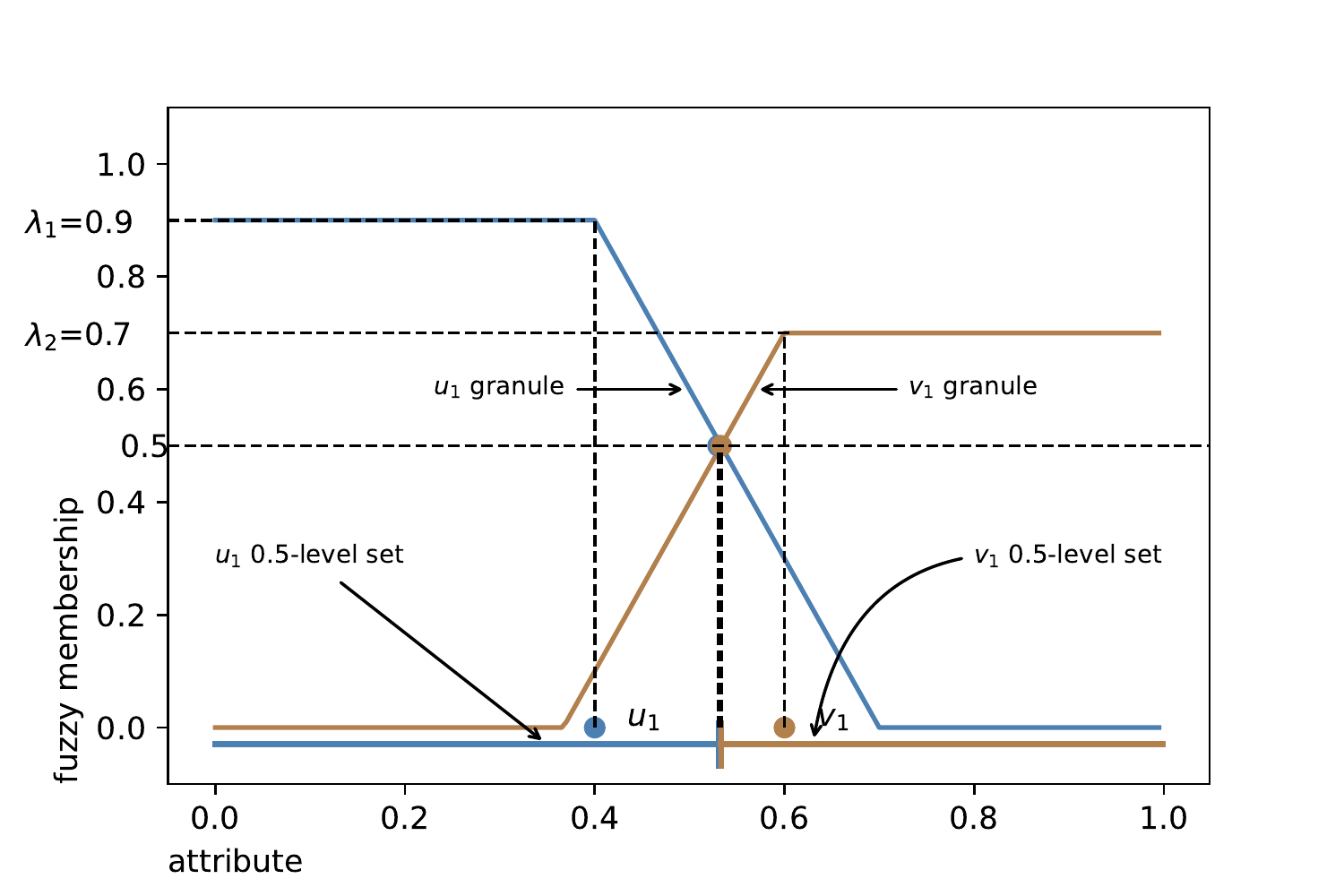}
    \includegraphics[width = .48\textwidth]{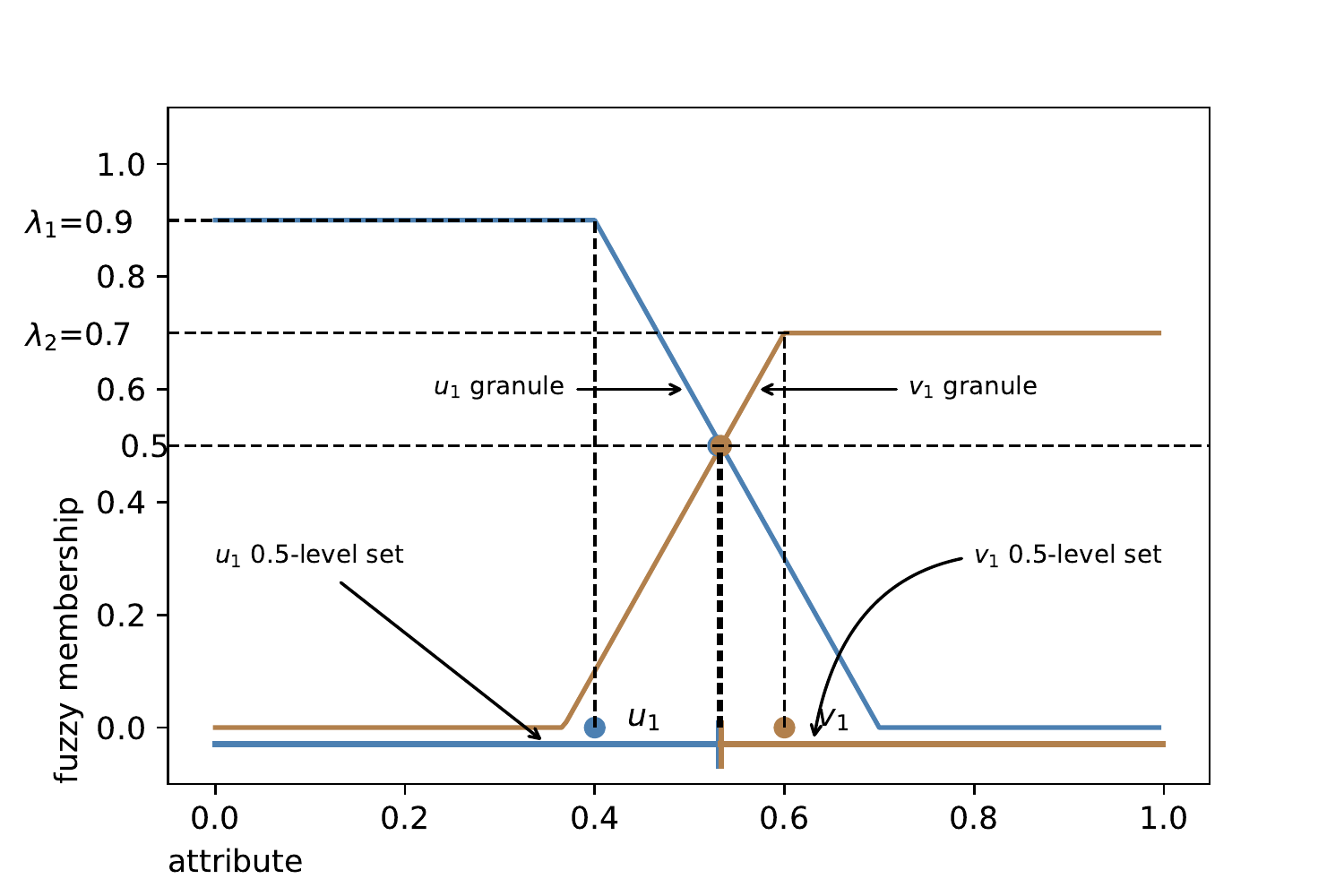}
    \caption{Granules in one dimension}
    \label{fig:granules_relation_1d}
\end{figure}{}
\begin{example}
Figures \ref{fig:granules_relation_1d} and \ref{fig:granules_relation} illustrate different relationships between granules, in one and two dimensions respectively. In Figure \ref{fig:granules_relation_1d}, objects are represented using one condition attribute $q$ whose range is 1. There are two objects $u_1$ and $v_1$ with respective attribute values $0.4$ and $0.6$. Their granules $R_{\lambda_1}^-(u)$ and $R_{\lambda_2}^-(v)$ are formed based on a $T$-preorder relation (left side of the figure) and $T$-equivalence relation (right side of the figure), parameter value $\gamma = 3$ and the Łukasiewicz $t$-norm. We vary parameters $\lambda_1$ and $\lambda_2$ in order to represent different relationship among two granules. We depict the fuzzy granules together with their 0.5-level sets. In the upper two images, the values of parameters are $\lambda_1=0.95$ and $\lambda_2=0.75$ which leads to overlapping granules (i.e., they are not $T$-disjoint). In the two images in the middle, the values of parameters are $\lambda_1=0.85$ and $\lambda_2=0.65$ which leads to $T$-disjoint granules, while in the lower two images, the values of parameters are $\lambda_1=0.9$ and $\lambda_2=0.7$ which leads to adjacent granules (here, the adjacency relation is symmetric). It is easy to verify that in this case, the 0.5-level sets follow the relation between granules, i.e., if the granules overlap, then the level sets overlap, if the granules are $T$-disjoint, then the level sets are disjoint and if the granules are adjacent, then the level sets have one common point. 

\begin{figure}[!htb]
    \centering
    \includegraphics[width = .48\textwidth]{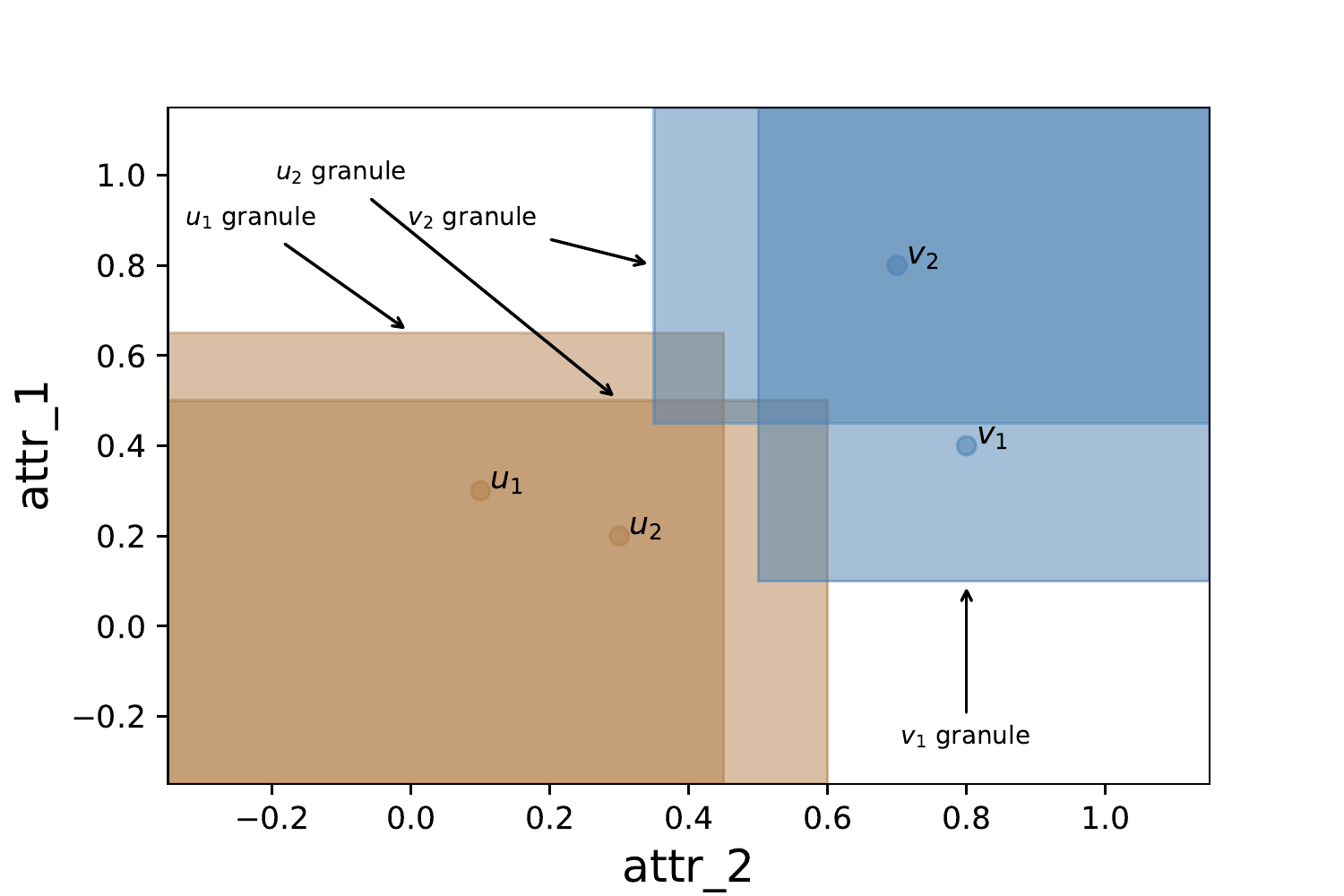}
    \includegraphics[width = .48\textwidth]{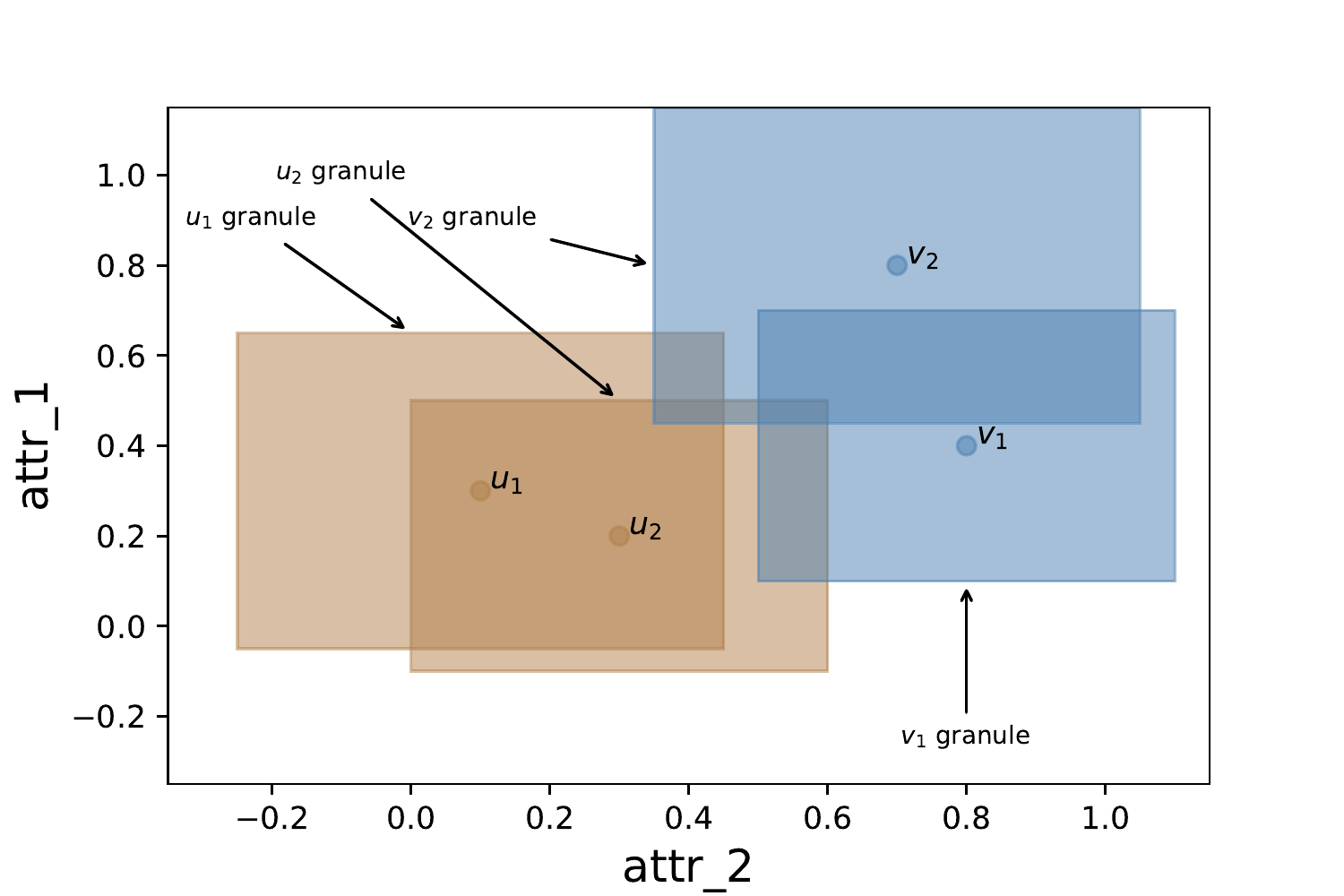}
    \includegraphics[width = .48\textwidth]{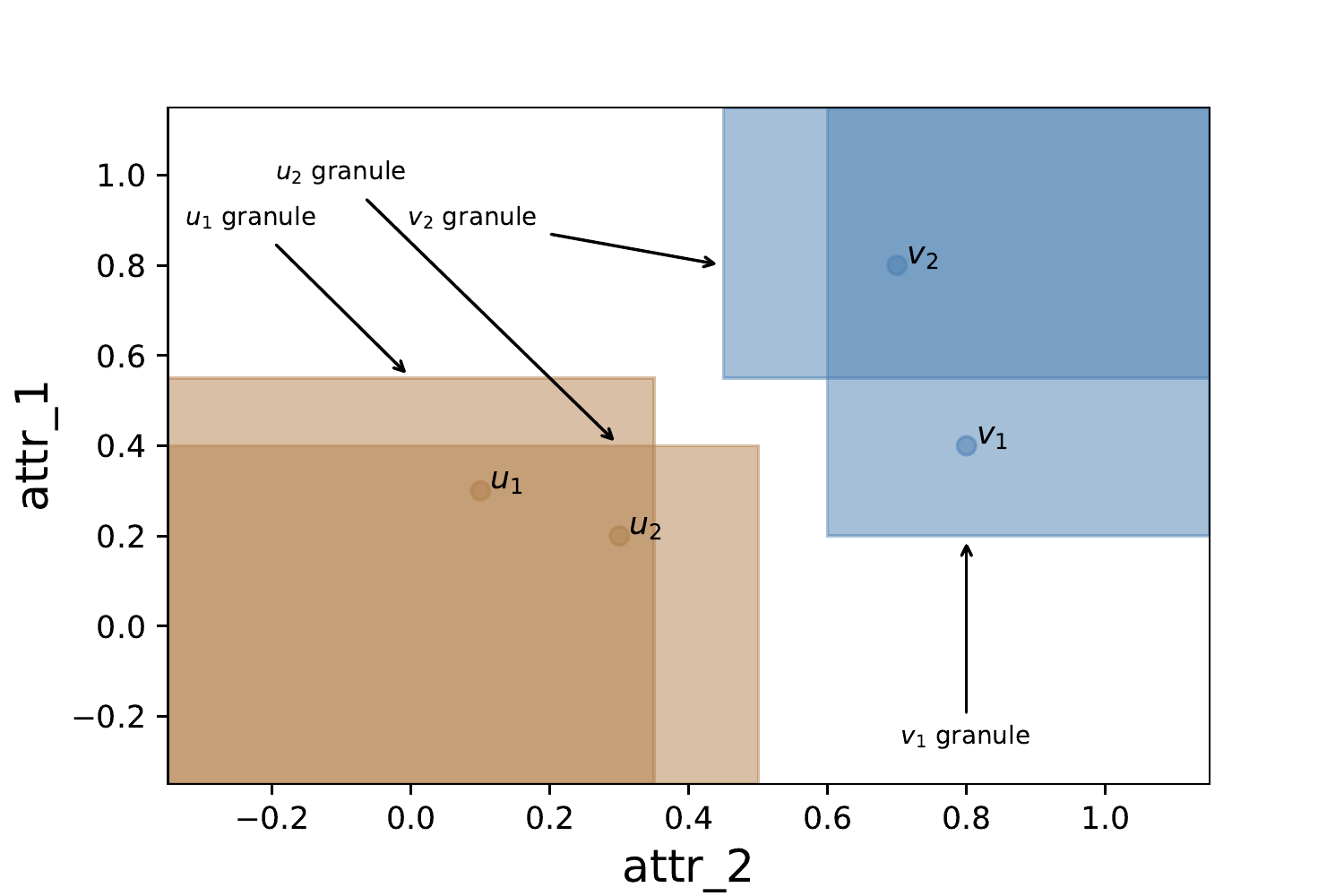}
    \includegraphics[width = .48\textwidth]{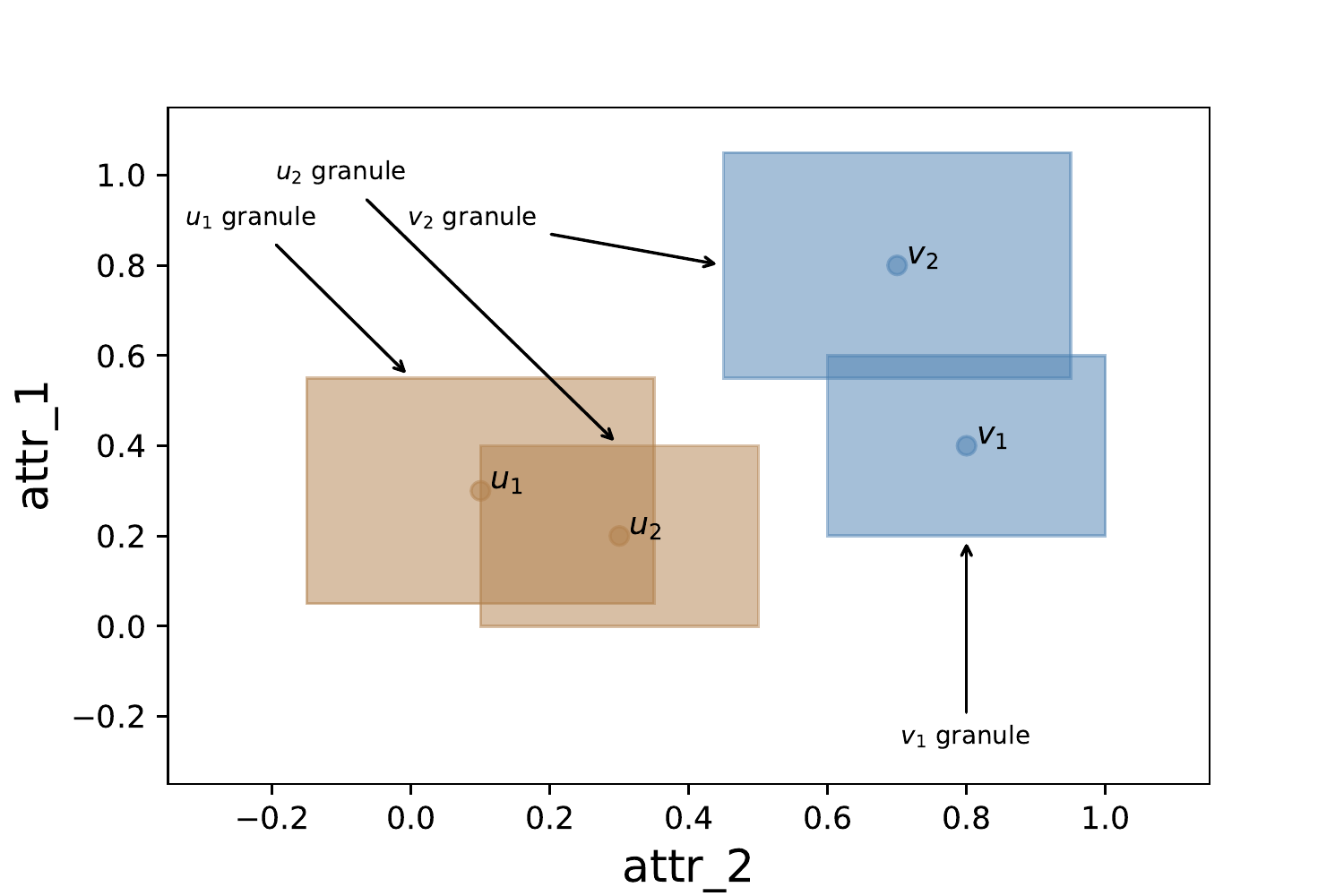}
    \includegraphics[width = .48\textwidth]{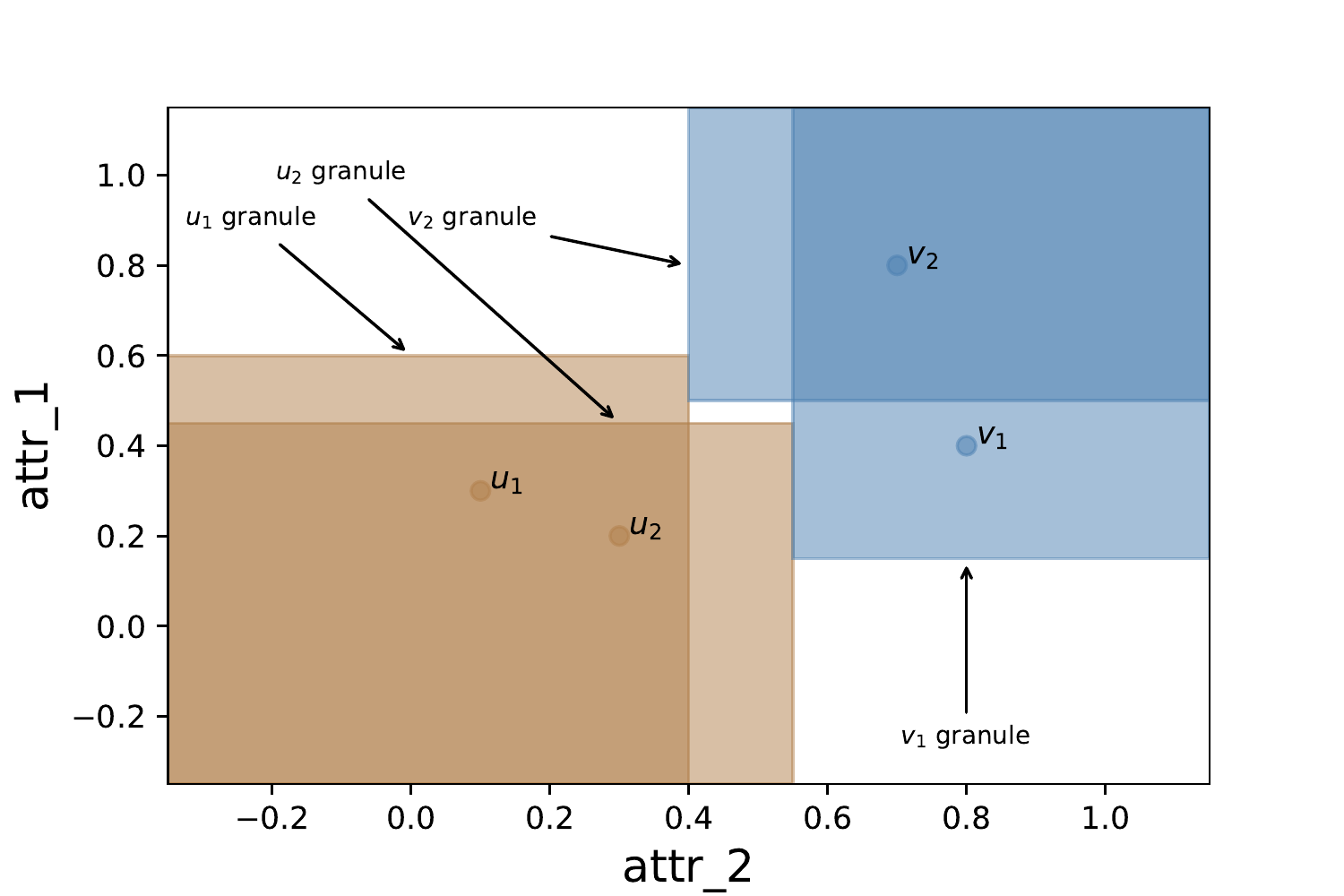}
    \includegraphics[width = .48\textwidth]{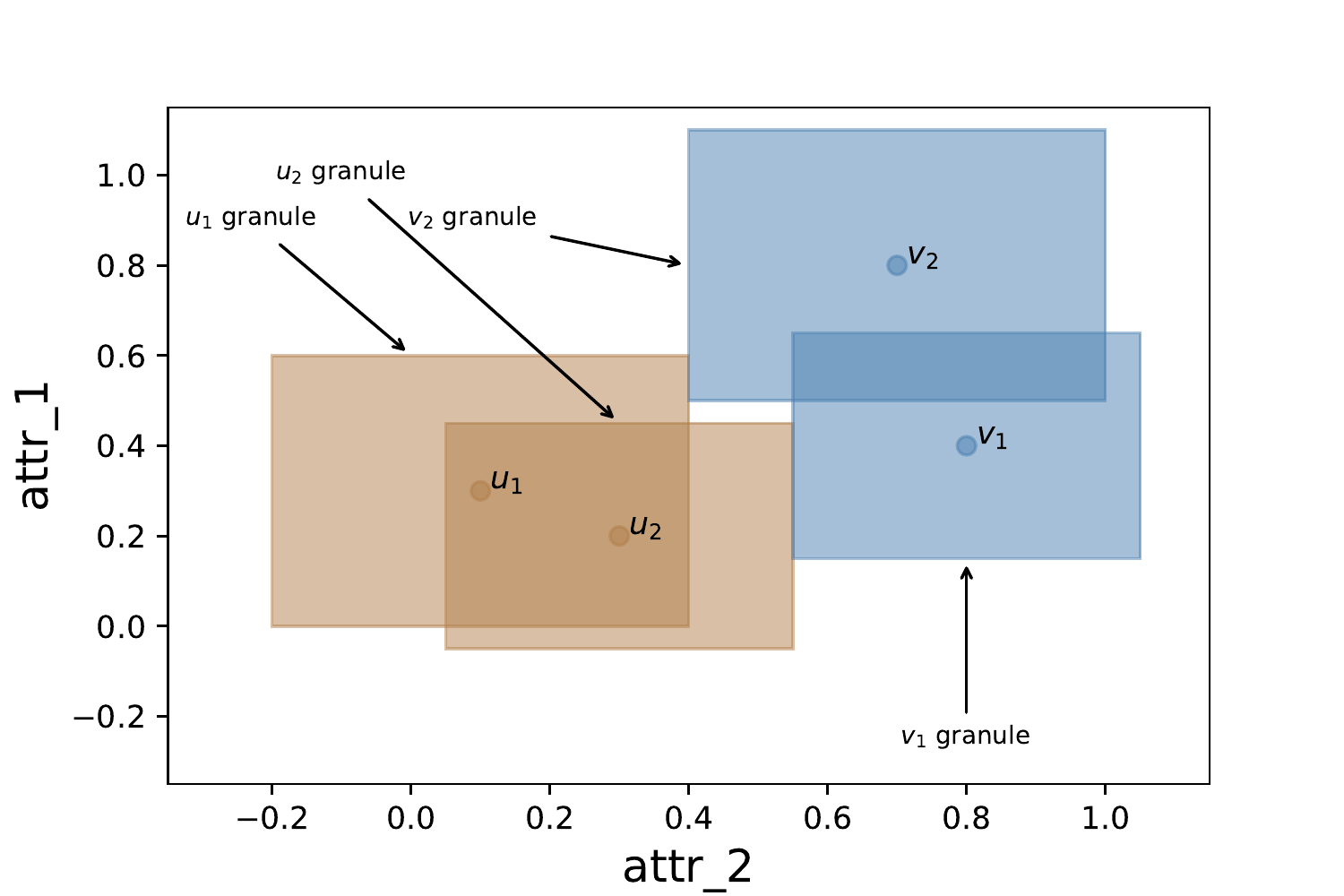}
    \caption{Granules in two dimensions}
    \label{fig:granules_relation}
\end{figure}{}
The use of 0.5-level sets is in particular useful to visualize the granules in the case of two dimensions. In Figure \ref{fig:granules_relation}, we have four objects from two classes, described by two condition attributes; $u_1$ and $u_2$ are from one class and $v_1$ and $v_2$ are from the other one. We illustrate the relationship of granules from different classes. The granules are formed using a $T$-preorder relation (left side of the figure) and $T$-equivalence relation (right side of the figure). While the granules in one dimension had triangular ($T$-equivalence) or "half-triangular" shape ($T$-preorder), in two dimensions they have pyramidal ($T$-equivalence) or "half-pyramidal" ($T$-preorder) shape. However, for the purpose of the visualisation in 2 dimensions of the 3-dimensional granules, we use 0.5-level sets of the granules. The level sets for the $T$-preorder relation take the form of quarter-plane (left images) and the rectangular form (right images) for the $T$-equivalence relation. Again, we can distinguish granules from different classes that overlap (upper two images), that are disjoint (two images in the middle) and that are adjacent (lower two images). In this case, the level sets of adjacent granules share an edge.
\end{example}

In Proposition \ref{prop:disjoint_complement} we showed that granules from $A$ and $coA$ are $T$-disjoint for granularly representable set $A$. Now we examine in which cases some of them are also adjacent. If $\widetilde{R}^-_{coA(v)}(v)$ is adjacent to $\widetilde{R}^+_{A(u)}(u)$ for $u, v \in U$, we have that
\begin{equation}
\label{eq:one_way_adjacent}
    A(u) = I(N(A(v)), N(\widetilde{R}(v, u))) \Leftrightarrow A(u) = I(\widetilde{R}(v, u), A(v)),
\end{equation}
where the equivalence holds because of (\ref{eq:implicator_negator_property3}).
If $\widetilde{R}^+_{A(u)}(u)$ is adjacent to $\widetilde{R}^-_{coA(v)}(v)$ then
\begin{align}
\label{eq:other_way_adjacent}
\begin{split}
    N(A(v)) = I(A(u), N(\widetilde{R}(u,v))) &\Leftrightarrow A(v) = N(I(A(u), N(\widetilde{R}(v,u)))) \\
    & \Leftrightarrow A(v) = T(\widetilde{R}(v,u), A(u)),
\end{split}
\end{align}
where the second equivalence holds because of (\ref{eq:implicator_negator_property2}).

\subsection{Application to granular approximations}
The next lemma, theorem, and corollary investigate the adjacency relationship of granules from the granular approximations, i.e., from the solutions of optimization problem (\ref{eq:general_optimization}).

\begin{lemma}
\label{lemma:small_self_adjacency}
Let loss function $L$ be of $\lor$-type and let $\hat{A}$ be a solution of optimization problem ($\ref{eq:general_optimization}$). If $\hat{A}(u) > A(u)$, then it holds that
$$
\hat{A}(u) = \max \{T(\widetilde{R}(u,v), \hat{A}(v)); v \in U, v \neq u\},
$$
while if $\hat{A}(u) < A(u)$, then it holds that
$$
\hat{A}(u) = \min \{I(\widetilde{R}(v,u), \hat{A}(v)); v \in U, v \neq u\} 
$$
\end{lemma}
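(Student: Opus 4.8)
The plan is to argue by contradiction, perturbing an optimal solution $\hat{A}$ only at the single point $u$ and moving its value strictly towards $A(u)$; the $\lor$-type hypothesis guarantees that this strictly lowers the objective, while only the finitely many constraints involving $u$ are affected and, as I will check, they all survive the perturbation.

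Consider first the case $\hat{A}(u)>A(u)$. Feasibility of $\hat{A}$ applied to the pairs $(u,v)$ gives $M:=\max\{T(\widetilde{R}(u,v),\hat{A}(v));v\in U,v\neq u\}\le\hat{A}(u)$, and I claim this is an equality. Assume $M<\hat{A}(u)$ and define $\hat{A}'$ to agree with $\hat{A}$ off $u$ and $\hat{A}'(u)=\max(M,A(u))$, so that $A(u)\le\hat{A}'(u)<\hat{A}(u)$. I would then verify feasibility: the constraints indexed by $(u,v)$ with $v\neq u$ hold since $T(\widetilde{R}(u,v),\hat{A}(v))\le M\le\hat{A}'(u)$; the constraints indexed by $(w,u)$ with $w\neq u$ hold because a $t$-norm is non-decreasing in its second argument, so $T(\widetilde{R}(w,u),\hat{A}'(u))\le T(\widetilde{R}(w,u),\hat{A}(u))\le\hat{A}(w)$; all other constraints are untouched; and $0\le M<\hat{A}(u)\le1$ gives $\hat{A}'(u)\in[0,1]$. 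Finally, since $A(u)\le\hat{A}'(u)<\hat{A}(u)$, $L(A(u),A(u))=0$ and $L(A(u),\cdot)$ is increasing on $(A(u),\infty)$, we get $L(A(u),\hat{A}'(u))<L(A(u),\hat{A}(u))$, so $\hat{A}'$ is feasible with strictly smaller objective value, contradicting the optimality of $\hat{A}$.

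The case $\hat{A}(u)<A(u)$ is dual. Using commutativity of $T$ and the residuation property, the constraints indexed by $(w,u)$ rewrite as $\hat{A}(u)\le I(\widetilde{R}(w,u),\hat{A}(w))$, so $m:=\min\{I(\widetilde{R}(v,u),\hat{A}(v));v\in U,v\neq u\}\ge\hat{A}(u)$; again I want equality. Assuming $m>\hat{A}(u)$, set $\hat{A}'$ equal to $\hat{A}$ off $u$ and $\hat{A}'(u)=\min(m,A(u))$, so $\hat{A}(u)<\hat{A}'(u)\le A(u)$. Feasibility: the constraints $(w,u)$ with $w\neq u$ hold because $\hat{A}'(u)\le m\le I(\widetilde{R}(w,u),\hat{A}(w))$ (residuation again); the constraints $(u,v)$ with $v\neq u$ hold because $T(\widetilde{R}(u,v),\hat{A}(v))\le\hat{A}(u)<\hat{A}'(u)$; the rest are unchanged and $\hat{A}'(u)\in[0,1]$. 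Since $L(A(u),\cdot)$ is decreasing on $(-\infty,A(u))$ with value $0$ at $A(u)$, we again get $L(A(u),\hat{A}'(u))<L(A(u),\hat{A}(u))$, a contradiction; relabelling the running index yields the stated formula.

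The main obstacle — and essentially the only point that needs care — is checking that the one-coordinate perturbation $\hat{A}'$ remains feasible: this requires splitting the constraints according to whether $u$ appears as the relating object (handled by monotonicity of $T$) or as the target object (handled by the residuated reformulation of the constraint), so that the single modified value stays squeezed between the bound imposed by its incoming granules and the bound imposed by its outgoing granules. The remaining ingredient is the observation that being ``of $\lor$-type'' is exactly what converts ``strictly closer to $A(u)$'' into ``strictly smaller loss'', including the boundary subcase where $\hat{A}'(u)$ is set equal to $A(u)$ itself, where one uses $L(A(u),A(u))=0<L(A(u),\hat{A}(u))$.
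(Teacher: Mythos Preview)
Your proof is correct and follows essentially the same approach as the paper: argue by contradiction, replace $\hat{A}(u)$ by $\max(M,A(u))$ (respectively $\min(m,A(u))$), and use the $\lor$-type property to obtain a strictly smaller objective value. The paper merely asserts that the perturbed solution is ``granularly representable (easily verifiable)'', whereas you spell out the verification by splitting the constraints into those where $u$ is the target object and those where $u$ is the relating object; so your argument is the same as the paper's, just with the feasibility check made explicit.
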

\begin{proof}
Let $\alpha_u = \max \{T(\widetilde{R}(u,v), \hat{A}(v)); v \in U, v \neq u\}$ for some $u, \hat{A}(u) > A(u)$. If the first condition of the theorem is not satisfied, then from the granular representability it holds that $\alpha_u < \hat{A(u)}$. Replacing $\hat{A(u)}$ with $\max(\alpha_u, A(u))$ leads to a solution that is also granularly representable (easily verifiable) and that ensures a smaller value of the objective function since the loss function $L$ is of $\lor$-type. That is a contradiction. 

Now, let $\alpha_u = \min \{I(\widetilde{R}(v,u), \hat{A}(v)); v \in U, v \neq u\} $ for some $u, \hat{A}(u) < A(u)$. If the second condition of the theorem is not satisfied, then from the granular representability it holds that $\alpha_u > \hat{A(u)}$. Replacing $\hat{A(u)}$ with $\min(\alpha_u, A(u))$ leads to a solution that is also granularly representable (easily verifiable) and that ensures a smaller value of the objective function since the loss function $L$ is of $\lor$-type. That is a contradiction. 
\end{proof}

\begin{theorem}
\label{thm:big_self_adjacency}
Let loss function $L$ be of $\lor$-type and let $\hat{A}$ be a solution of optimization problem ($\ref{eq:general_optimization}$). We define three sets:
\begin{itemize}
    \item $U^- = \{u \in U; \hat{A}(u) < A(u)\}$,
    \item $U^0 = \{u \in U; \hat{A}(u) = A(u)\}$,
    \item $U^+ = \{u \in U; \hat{A}(u) > A(u)\}$.
\end{itemize}
It holds that
\begin{equation}
\label{eq:fuzzy_adjecency1}
    \hat{A}(u) = \max \{T(\widetilde{R}(u,v), \hat{A}(v)); v \in U^- \cup U^0\},
\end{equation}
for $u \in U^+$ and
\begin{equation}
\label{eq:fuzzy_adjecency2}
    \hat{A}(u) = \min \{I(\widetilde{R}(v,u), \hat{A}(v)); v \in U^+ \cup U^0\},
\end{equation}
for $u \in U^-$.
\end{theorem}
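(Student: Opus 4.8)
The plan is to derive both equalities by contradiction, taking Lemma~\ref{lemma:small_self_adjacency} as the starting point, and treating the two claims (for $u\in U^+$ and for $u\in U^-$) as mutually dual. First note that granular representability already gives $T(\widetilde{R}(u,v),\hat{A}(v))\le\hat{A}(u)$ for every $v$, so for $u\in U^+$ the right-hand side of (\ref{eq:fuzzy_adjecency1}) is at most $\hat{A}(u)$; hence (\ref{eq:fuzzy_adjecency1}) is equivalent to the existence of some $v\in U^-\cup U^0$ with $T(\widetilde{R}(u,v),\hat{A}(v))=\hat{A}(u)$. Lemma~\ref{lemma:small_self_adjacency} already says $\hat{A}(u)$ is the maximum of $T(\widetilde{R}(u,v),\hat{A}(v))$ over $v\neq u$; what must be shown is that this maximum is attained outside $U^+$. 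So suppose it is not, and let $W=\{u\in U^+ : T(\widetilde{R}(u,v),\hat{A}(v))<\hat{A}(u) \text{ for all } v\in U^-\cup U^0\}$ be non-empty.

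The first key step is to show that $W$ is closed under the maximizing relation: if $u\in W$ and $v\neq u$ attains $\hat{A}(u)=T(\widetilde{R}(u,v),\hat{A}(v))$, then $v\in W$. Such a $v$ lies in $U^+$ (otherwise $u\notin W$); if moreover $v\notin W$, then, since a finite maximum is attained, there is $w\in U^-\cup U^0$ with $T(\widetilde{R}(v,w),\hat{A}(w))=\hat{A}(v)$, and associativity/commutativity of $T$, $T$-transitivity of $\widetilde{R}$, monotonicity of $T$ and granular representability give $\hat{A}(u)=T\big(\widetilde{R}(u,v),T(\widetilde{R}(v,w),\hat{A}(w))\big)\le T(\widetilde{R}(u,w),\hat{A}(w))\le\hat{A}(u)$, so $T(\widetilde{R}(u,w),\hat{A}(w))=\hat{A}(u)$ with $w\in U^-\cup U^0$, contradicting $u\in W$. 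Thus $v\in W$, and in particular every $u\in W$ admits some $v\in W$, $v\neq u$, with $\hat{A}(u)=T(\widetilde{R}(u,v),\hat{A}(v))\le\hat{A}(v)$ by (\ref{eq:t-norm_smaller_parameters}).

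The second key step is a local perturbation. Put $M=\max_{u\in W}\hat{A}(u)$ and $W'=\{u\in W : \hat{A}(u)=M\}$; for $u\in W'$ the companion $v$ above has $\hat{A}(v)\ge M$, hence $\hat{A}(v)=M$ and $v\in W'$. Now lower $\hat{A}$ by a small $\epsilon>0$ on every point of $W'$ and leave it unchanged elsewhere. Feasibility in (\ref{eq:general_optimization}) only needs re-checking for pairs $(u,v)$ with an index in $W'$: for $u,v\in W'$ one needs $T(\widetilde{R}(u,v),M-\epsilon)\le M-\epsilon$, which is (\ref{eq:t-norm_smaller_parameters}); for $v\in W'$, $u\notin W'$ the left-hand side only decreases; and for $u\in W'$, $v\notin W'$ one has $T(\widetilde{R}(u,v),\hat{A}(v))<M$ \emph{strictly} — because $v\in W\setminus W'$ forces $\hat{A}(v)<M$, while $v\notin W$ is impossible by the closedness just proved — so finitely many strict inequalities survive for $\epsilon$ small enough. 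Since $L$ is of $\lor$-type and $\hat{A}(u)=M>A(u)$ on $W'$, choosing also $\epsilon<\min_{u\in W'}(M-A(u))$ makes $\sum_{u}L(A(u),\hat{A}(u))$ strictly decrease, contradicting optimality of $\hat{A}$. Hence $W=\emptyset$ and (\ref{eq:fuzzy_adjecency1}) holds.

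Equality (\ref{eq:fuzzy_adjecency2}) follows by the dual argument: use the second half of Lemma~\ref{lemma:small_self_adjacency}, define $W$ via the defect of the minimum over $U^+\cup U^0$, and note that granular representability now reads $\hat{A}(u)\le I(\widetilde{R}(w,u),\hat{A}(w))$ by residuation. Closedness of $W$ is proved the same way, replacing the $T$-manipulations by (\ref{eq:t_norm_implicator_property}) together with $T$-transitivity of $\widetilde{R}$; then, with $m=\min_{u\in W}\hat{A}(u)$ and $W'=\{u\in W : \hat{A}(u)=m\}$, one this time \emph{raises} $\hat{A}$ by a small $\epsilon$ on $W'$. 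The only delicate constraints are the pairs $(u,v)$ with $v\in W'$, $u\notin W'$; rewriting $T(\widetilde{R}(u,v),m+\epsilon)\le\hat{A}(u)$ via residuation as $m+\epsilon\le I(\widetilde{R}(u,v),\hat{A}(u))$, one checks $I(\widetilde{R}(u,v),\hat{A}(u))>m$, using $I(x,y)\ge y$ together with $\hat{A}(u)>m$ when $u\in W\setminus W'$ and the closedness of $W$ when $u\notin W$, so the perturbation stays feasible for $\epsilon$ small; the $\lor$-type property then forces the objective strictly down, a contradiction. I expect the main obstacle to be precisely this feasibility bookkeeping for the perturbed $\hat{A}$ — in particular, extracting the \emph{strict} slack of the dangerous constraints from the closedness of $W$ and from $W'$ being the extreme layer of $W$; everything else is routine use of the residual-triplet properties (\ref{eq:t-norm_smaller_parameters})--(\ref{eq:t_norm_implicator_property}).
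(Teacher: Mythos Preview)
Your proof is correct and follows essentially the same route as the paper's: define the failure set $W$ (the paper's $U_2^+$), show via chaining that any maximizer of a point in $W$ again lies in $W$, isolate the extreme layer $W'$ (the paper's $U_3^+$), and lower $\hat A$ there to contradict optimality---the paper replaces by a specific value $\beta^+$ instead of subtracting a small $\epsilon$, but the mechanism is identical, and your dual argument for (\ref{eq:fuzzy_adjecency2}) mirrors the paper's as well. One phrase in your feasibility check is misleading as written: ``$v\notin W$ is impossible by the closedness just proved'' is literally false (there are plenty of $v\notin W$); what you need, and what closedness actually gives, is that \emph{equality} $T(\widetilde{R}(u,v),\hat A(v))=M$ with $v\notin W'$ is impossible---for if it held, $v$ would be a maximizer for $u\in W'$, closedness would force $v\in W$, and then $\hat A(v)\ge M$ would force $v\in W'$, a contradiction---so the strict slack and the rest of the argument go through.
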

\begin{proof}
    Condition (\ref{eq:fuzzy_adjecency1}) can be reformulated as
    $$
    \forall u\in U^+,\, \exists v \in U^- \cup U_0; \hat{A}(u) = T(\widetilde{R}(u,v), \hat{A}(v)).
    $$
    Let $U_1^+ \subseteq U^+$ be a set of instances which satisfy the previous condition and let $U_2^+ =  U^+ - U_1^+$. 
    Let $u^* \in U_2^+$ be an instance with the largest $\hat{A(u)}$. From Lemma \ref{lemma:small_self_adjacency}, there is $v \in U, v \neq u^*$ such that $\hat{A}(u^*) = T(\widetilde{R}(u,v), \hat{A}(v))$.
    
    By the assumption $u^* \in U_2^+$, we have that $v \in U^+$. If $v \in U_1^+$, then there is $w \in U^- \cap U^0$ such that $\hat{A}(v) = T(\widetilde{R}(v, w), \hat{A}(w))$. We have that 
    \begin{align*}
        A(u^*) &= T(\widetilde{R}(u^*,v), \hat{A}(v)) \\
            &= T(\widetilde{R}(u^*,v), T(\widetilde{R}(v, w), \hat{A}(w))) \\
            & = T(T(\widetilde{R}(u^*,v), \widetilde{R}(v, w)), \hat{A}(w)) \\
            & \leq T(\widetilde{R}(u^*, v), \hat{A}(w)).
    \end{align*}
    The last inequality holds because of the $T$-transitivity of $\widetilde{R}$ and the monotonicity of $T$. The opposite inequality holds from the granular representability which leads to the conclusion that $ A(u^*) = T(\widetilde{R}(u^*, v), \hat{A}(w))$ which contradicts the assumption that $u^* \in U_2^+$. Hence, $v \in U^+_2$. 
    
    From $\hat{A}(u^*) = T(\widetilde{R}(u^*,v), \hat{A}(v))$, it holds $\hat{A}(v) \geq \hat{A}(u^*)$ due to (\ref{eq:t-norm_smaller_parameters}). Since $A(u^*)$ is the largest in $U_2^+$ by the assumption, then $\hat{A}(u^*) = \hat{A}(v)$. Denote with $U_3^+ \subseteq U_2^+$ instances from $U_2^+$ for which the membership degree in $\hat{A}$ is $\hat{A}(u^*)$. Every pair of instances from $U_3^+$ satisfies (\ref{eq:granular_rep}) since they have the same membership value in $\hat{A}$. Due to maximality of $\hat{A}(u)$ it holds that for $u \in U_3 ^+$ and for $v \in U - U_3 ^+$, it holds that $\hat{A}(u) > T(\widetilde{R}(u,v), \hat{A}(v))$. Denote
    $$
    \beta^+ = \max(\max \{A(u);  u \in U_3 ^+\}, \max \{ \max\{T(\widetilde{R}(u,v), \hat{A(v)}); v \in U - U_3^+ \}; u \in U_3^+ \}).
    $$
    From the assumptions above, it holds that $\beta^+ < \hat{A}(u^*)$ which implies $\beta^+ < \hat{A}(u)$ for $u \in U_3^+$. Now, let $\hat{A}^*$ be a fuzzy set where values $\hat{A}(u)$ for $u \in U_3^+$ are replaced with $\beta^+$. We observe that $\hat{A}^*$ is granularly representable since $\hat{A}^*(u)$ are pairwise equal for $u \in U^+_3$ and also for every $u \in U_3 ^+$ and for every $v \in U - U_3 ^+$ it holds that $T(\widetilde{R}(u,v), \hat{A}^*(v)) \leq \hat{A}^*(u)$ by the definition of $\beta^+$. Next, we observe that the objective value with $\hat{A}^*$ is smaller than with $\hat{A}$ because $A(u) < \hat{A}^*(u) < \hat{A}(u)$ for $u \in U^+_3$ and due to the fact that $L$ is of $\lor$-type. 
    
    Therefore, we obtained a feasible solution with a smaller objective function, which contradicts the optimality of $\hat{A}$. This contradiction implies that $U_2^+$ must be empty which is equivalent to ($\ref{eq:fuzzy_adjecency1}$).
    
    On the other hand, condition (\ref{eq:fuzzy_adjecency2}) can be reformulated as
    $$
    \forall u\in U^-,\, \exists v \in U^+ \cup U_0; \hat{A}(u) = I(\widetilde{R}(v,u), \hat{A}(v)).
    $$
    Let $U_1^- \subseteq U^-$ be a set of instances which satisfy the previous condition and let $U_2^- =  U^- - U_1^-$. 
    Let $u^* \in U_2^-$ be an instance with the smallest $\hat{A}(u)$. From Lemma \ref{lemma:small_self_adjacency}, there is $v \in U, v \neq u^*$ such that $\hat{A}(u^*) = I(\widetilde{R}(v,u), \hat{A}(v))$. 
    
    By the assumption $u^* \in U_2^-$, it holds that $v \in U^-$. Assume that $v \in U_1^-$. Then, there is $w \in U^+ \cap U^0$ such that $\hat{A}(v) = I(\widetilde{R}(w, v), \hat{A}(w))$. We have that 
    \begin{align*}
        A(u^*) &= I(\widetilde{R}(v,u^*), \hat{A}(v)) \\
            &= I(\widetilde{R}(v, u^*), I(\widetilde{R}(w, v), \hat{A}(w))) \\
            & = I(T(\widetilde{R}(v, u^*), \widetilde{R}(w, v)), \hat{A}(w)) \\
            & \geq I(\widetilde{R}( w, u^*), \hat{A}(w)).
    \end{align*}
    The last equality holds because of (\ref{eq:t_norm_implicator_property}).
    The last inequality holds because of the $T$-transitivity of $\widetilde{R}$ and the fact that $I$ is decreasing in its first argument. The opposite inequality holds from the granular representability which leads to the conclusion that $ A(u^*) = I(\widetilde{R}(w, u^*), \hat{A}(w))$ which contradicts the assumption that $u^* \in U_2^+$. Hence, $v \in U^-_2$. 
    
    From $\hat{A}(u^*) = I(\widetilde{R}(v, u^*), \hat{A}(v))$, it holds that $\hat{A}(v) \leq A(u^*)$ due to (\ref{eq:implicator_greater_second_parameter}). Since $\hat{A}(u^*)$ is the smallest by the assumption, then $\hat{A}(u^*) = \hat{A}(v)$. Denote with $U_3^- \subseteq U_2^-$ instances from $U_2^-$ that have value $\hat{A}(u^*)$. Every pair of instances from $U_3^-$ satisfy (\ref{eq:granular_rep}) since they have the same membership degree in $\hat{A}$. For every $u \in U_3 ^+$ and for every $v \in U - U_3 ^+$ it holds that $\hat{A}(u) < I(\widetilde{R}(v,u), \hat{A}(v))$. Denote
    $$
    \beta^- = \min(\min \{A(u);  u \in U_3 ^-\}, \min \{ \min\{I(\widetilde{R}(v,u), \hat{A(v)}); v \in U - U_3^+ \}; u \in U_3^+ \}).
    $$
    From the above assumption, it holds that $\beta^- > \hat{A}(u^*)$, which implies $\beta > \hat{A}(u)$ for $u \in U_3^-$. Now, let $\hat{A}^{**}$ be a fuzzy set where values $\hat{A}(u)$ for $u \in U_3^-$ are replaced with $\beta^-$. We observe that $\hat{A}^{**}$ is granularly representable since $\hat{A}^{**}(u)$ are pairwise equal for $u \in U^-_3$ and also for every $u \in U_3 ^-$ and for every $v \in U - U_3 ^-$ it holds that $I(\widetilde{R}(v,u), \hat{A}^{**}(v)) \leq \hat{A}^{**}(u)$ by the definition of $\beta^-$. Next, we observe that the objective value with $\hat{A}^{**}$ is smaller than with $\hat{A}$ because $A(u) > \hat{A}^{**}(u) > \hat{A}(u)$ for $u \in U^-_3$ and due to the fact that $L$ is of $\lor$-type. 
    
    Therefore, we obtained a feasible solution with a smaller objective function, which contradicts the optimality of $\hat{A}$. This contradiction implies that $U_2^-$ must be empty, which is equivalent to (\ref{eq:fuzzy_adjecency2}).

\end{proof}

\begin{corollary}
\label{cor:big_adjacent}
Let loss function $L$ be of $\lor$-type and let $\hat{A}$ be a solution of optimization problem ($\ref{eq:general_optimization}$) defined w.r.t. an IMTL triplet $(T,I,N)$. Let $U^-, U^0, U^+$ be defined as in Theorem \ref{thm:big_self_adjacency}. Then, the following holds.
\begin{itemize}
    \item For all $ u\in U^+$, there is $ v \in U^- \cup U_0$ such that $R^+_{\hat{A}(v)}$ is adjacent to $R^-_{co\hat{A}(u)}$.
    \item For all $ u \in U^-$, there is $v \in U^+ \cup U_0$ such that $R^-_{co\hat{A}(v)}$ is adjacent to $R^+_{\hat{A}(u)}$.
\end{itemize}
\end{corollary}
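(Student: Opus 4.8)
The plan is to obtain both bullets as essentially immediate consequences of Theorem~\ref{thm:big_self_adjacency}, using only the bridge between the ``self-adjacency'' equalities (\ref{eq:fuzzy_adjecency1})--(\ref{eq:fuzzy_adjecency2}) and the adjacency relation of Definition~\ref{def:adjecency_relation} that is already recorded in equations (\ref{eq:one_way_adjacent}) and (\ref{eq:other_way_adjacent}). Before starting I would note that these bridging equivalences are available here: $\hat{A}$ is granularly representable because it satisfies the constraints of (\ref{eq:general_optimization}), and $(T,I,N)$ is an IMTL triplet by hypothesis, so the identity (\ref{eq:implicator_negator_property3}) used to derive (\ref{eq:one_way_adjacent})--(\ref{eq:other_way_adjacent}) holds (the other ingredient, (\ref{eq:implicator_negator_property2}), holds for any residual triplet).

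For the first bullet I would take $u \in U^+$. By (\ref{eq:fuzzy_adjecency1}) there exists $v \in U^- \cup U^0$ with $\hat{A}(u) = T(\widetilde{R}(u,v), \hat{A}(v))$. Reading (\ref{eq:other_way_adjacent}) with the roles of $u$ and $v$ interchanged, this equality is exactly the statement that $\widetilde{R}^+_{\hat{A}(v)}(v)$ is adjacent to $\widetilde{R}^-_{co\hat{A}(u)}(u)$; explicitly, applying $N$ to both sides, using commutativity of $T$ and (\ref{eq:implicator_negator_property2}) together with involutivity of $N$, turns the equality into $N(\hat{A}(u)) = I(\hat{A}(v), N(\widetilde{R}(u,v)))$, which is the defining condition in Definition~\ref{def:adjecency_relation} for ``$\widetilde{R}^+_{\hat{A}(v)}(v)$ adjacent to $\widetilde{R}^-_{co\hat{A}(u)}(u)$''. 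This yields the first claim.

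For the second bullet I would take $u \in U^-$. By (\ref{eq:fuzzy_adjecency2}) there exists $v \in U^+ \cup U^0$ with $\hat{A}(u) = I(\widetilde{R}(v,u), \hat{A}(v))$. By (\ref{eq:one_way_adjacent}) --- equivalently, by rewriting the right-hand side via (\ref{eq:implicator_negator_property3}) as $\hat{A}(u) = I(N(\hat{A}(v)), N(\widetilde{R}(v,u)))$ --- this is precisely the condition of Definition~\ref{def:adjecency_relation} saying that $\widetilde{R}^-_{co\hat{A}(v)}(v)$ is adjacent to $\widetilde{R}^+_{\hat{A}(u)}(u)$. This yields the second claim.

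I do not expect a genuine obstacle, since all of the substantive work sits in Theorem~\ref{thm:big_self_adjacency}. The only points requiring care are bookkeeping ones: the adjacency relation of Definition~\ref{def:adjecency_relation} is directed and not symmetric, so one must keep straight which granule is based at which point and which of the two defining identities is being invoked (this is why the first bullet uses (\ref{eq:other_way_adjacent}) with the arguments swapped, while the second uses (\ref{eq:one_way_adjacent}) directly); and one should record explicitly that passing between the $T$-/$I$-equalities of Theorem~\ref{thm:big_self_adjacency} and the adjacency conditions genuinely uses the IMTL hypothesis, through involutivity of $N$ and identity (\ref{eq:implicator_negator_property3}).
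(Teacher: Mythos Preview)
Your proposal is correct and follows exactly the route taken in the paper: the paper's proof is a single sentence stating that the corollary is a direct consequence of Theorem~\ref{thm:big_self_adjacency} together with equations (\ref{eq:one_way_adjacent}) and (\ref{eq:other_way_adjacent}), and you have simply written out the bookkeeping that makes this precise, including the role-swap needed for the first bullet and the explicit use of the IMTL hypothesis via involutivity of $N$ and (\ref{eq:implicator_negator_property3}).
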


\begin{proof}
The corollary is a direct consequence of Theorem \ref{thm:big_self_adjacency} and equations (\ref{eq:one_way_adjacent}) and (\ref{eq:other_way_adjacent}).
\end{proof}

Note that Theorem \ref{thm:big_self_adjacency} does not require for residual triplet $(T, I, N)$ to be an IMTL triplet, hence it can lead to more general results that are not related to the granular adjacency relationships.

\section{Case of a classification problem}
\label{sec:case_of_a_classification_problem}

First, we consider a binary classification problem, i.e., we distinguish two classes in $U$: $A$ and $coA$ which are now crisp (ordinary) sets. Notations $A$ and $coA$ will be also used for the fuzzy sets that encode the corresponding decision class, i.e., $A(u) = 1$ if $u \in A$ while $A(u) = 0$ if $u \in coA$.

\begin{proposition}
\label{eq:classification_adjacency}
Let loss function $L$ be of $\lor$-type and let $\hat{A}$ be a granular approximation of a crisp set $A$ w.r.t. an IMTL triplet $(T,I,N)$. Then, the following holds.
\begin{itemize}
    \item For all $ u\in A$, there is $ v \in coA$ such that $R^-_{co\hat{A}(v)}$ is adjacent to $R^+_{\hat{A}(u)}$. 
    \item For all $ u \in coA$, there is $v \in A$ such that $R^+_{\hat{A}(v)}$ is adjacent to $R^-_{co\hat{A}(u)}$.
\end{itemize}

\end{proposition}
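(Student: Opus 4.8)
The plan is to deduce this from Corollary~\ref{cor:big_adjacent} — equivalently, from Theorem~\ref{thm:big_self_adjacency} together with the adjacency characterizations (\ref{eq:one_way_adjacent})--(\ref{eq:other_way_adjacent}) — using crispness of $A$ to push the adjacency witness into the \emph{opposite} class. First I would record what $A(u)\in\{0,1\}$ forces on the sets $U^-,U^0,U^+$ of Theorem~\ref{thm:big_self_adjacency}: no $u\in A$ can have $\hat{A}(u)>A(u)=1$ and no $u\in coA$ can have $\hat{A}(u)<A(u)=0$, so $U^-\subseteq A$ and $U^+\subseteq coA$, hence $A=U^-\sqcup(A\cap U^0)$ and $coA=U^+\sqcup(coA\cap U^0)$. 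The two bullet points are dual to each other (swap $\widetilde{R}$ with $\widetilde{R}^{-1}$, $T$ with $I$, $\max$ with $\min$, $0$ with $1$), so I would carry out the first in detail and only sketch the second. I would also note that a genuine binary problem has $A\neq\emptyset$ and $coA\neq\emptyset$ (needed in the degenerate cases below), and that $\hat{A}$ is granularly representable, being feasible for (\ref{eq:general_optimization}).

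For the first bullet, fix $u\in A$ and distinguish whether $u\in U^-$ or $u\in A\cap U^0$. If $u\in U^-$, then $\hat{A}(u)<A(u)=1$, and (\ref{eq:fuzzy_adjecency2}) provides a $v\in U^+\cup U^0$ with $\hat{A}(u)=I(\widetilde{R}(v,u),\hat{A}(v))$. This $v$ cannot satisfy $\hat{A}(v)=1$ (it would give $I(\widetilde{R}(v,u),1)=1\neq\hat{A}(u)$), and were $v\in A$ we would get $v\in A\cap U^0$ (since $A\cap U^+=\emptyset$) and hence $\hat{A}(v)=A(v)=1$, a contradiction; therefore $v\in coA$. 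By Definition~\ref{def:adjecency_relation} and the equivalence (\ref{eq:one_way_adjacent}), read with $\hat{A}$ in place of the generic granularly representable set, the identity $\hat{A}(u)=I(\widetilde{R}(v,u),\hat{A}(v))$ is exactly the statement that $\widetilde{R}^-_{co\hat{A}(v)}(v)$ is adjacent to $\widetilde{R}^+_{\hat{A}(u)}(u)$. If instead $u\in A\cap U^0$, then $\hat{A}(u)=A(u)=1$, so $\widetilde{R}^+_{\hat{A}(u)}(u)$ has parameter $1$; since $\hat{A}$ is granularly representable, Proposition~\ref{prop:disjoint_complement} makes it $T$-disjoint from $\widetilde{R}^-_{co\hat{A}(v)}(v)$ for every $v$, and then Proposition~\ref{prop:adj_to_1} makes $\widetilde{R}^-_{co\hat{A}(v)}(v)$ adjacent to it; any $v\in coA$ works.

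For the second bullet the scheme is mirrored. If $u\in U^+$, then $\hat{A}(u)>A(u)=0$, and (\ref{eq:fuzzy_adjecency1}) gives $v\in U^-\cup U^0$ with $\hat{A}(u)=T(\widetilde{R}(u,v),\hat{A}(v))$; here $\hat{A}(v)=0$ is impossible (it gives $T(\widetilde{R}(u,v),0)=0\neq\hat{A}(u)$), and $v\in coA$ would force $v\in U^0$ (since $U^-\subseteq A$) and hence $\hat{A}(v)=0$, so $v\in A$; the equivalence (\ref{eq:other_way_adjacent}), read with $\hat{A}$, then turns this identity into adjacency of $\widetilde{R}^+_{\hat{A}(v)}(v)$ to $\widetilde{R}^-_{co\hat{A}(u)}(u)$. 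If $u\in coA\cap U^0$, then $co\hat{A}(u)=N(0)=1$, so $\widetilde{R}^-_{co\hat{A}(u)}(u)$ has parameter $1$; $T$-disjointness of $\widetilde{R}^+_{\hat{A}(v)}(v)$ from it (Proposition~\ref{prop:disjoint_complement}) reads $\hat{A}(v)\le N(\widetilde{R}(u,v))$, i.e.\ $I(\hat{A}(v),N(\widetilde{R}(u,v)))=1$ by the ordering property (\ref{eq:ordering_property}), which is precisely adjacency of $\widetilde{R}^+_{\hat{A}(v)}(v)$ to $\widetilde{R}^-_{co\hat{A}(u)}(u)$ (the content of Proposition~\ref{prop:adj_to_1} with the two granules interchanged), so any $v\in A$ works.

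The proof is largely bookkeeping, and the two places that need care are exactly those where Corollary~\ref{cor:big_adjacent} is too coarse: it only yields a witness in $U^{\mp}\cup U^0$ (not necessarily in the opposite class), and it is silent when $u\in U^0$. The first gap is closed by the observation that an extremum of (\ref{eq:fuzzy_adjecency1})--(\ref{eq:fuzzy_adjecency2}) cannot be attained at a same-class point, because such a point has $\hat{A}$-value $0$ or $1$ and would contradict $u\in U^\pm$; the second gap is closed by the fact that $\hat{A}(u)\in\{0,1\}$ makes the relevant granule one of parameter $1$, for which disjointness — which holds by granular representability (Proposition~\ref{prop:disjoint_complement}) — already coincides with adjacency (Proposition~\ref{prop:adj_to_1}).
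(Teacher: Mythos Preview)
Your proof is correct and follows essentially the same approach as the paper's: both observe $U^-\subseteq A$ and $U^+\subseteq coA$, handle the case $u\in U^0$ via Proposition~\ref{prop:adj_to_1}, and for $u\in U^\mp$ invoke Theorem~\ref{thm:big_self_adjacency}/Corollary~\ref{cor:big_adjacent} and then exclude a same-class witness by showing that a $U^0$-witness with the ``wrong'' $\hat{A}$-value would force $\hat{A}(u)\in\{0,1\}$. Your write-up is a bit more thorough --- you spell out the second bullet and flag the implicit nonemptiness of $A$ and $coA$ --- but the logic is identical.
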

\begin{proof}
Let $U^+$, $U^0$, $U^-$ be the sets defined in Theorem \ref{thm:big_self_adjacency}. Obviously, it holds that $U^- \subseteq A$ and $U^+ \subseteq coA$. We prove the first part of the proposition, while the second part holds by analogy.

Let $u \in A$. If $u \in A - U^- \Leftrightarrow \hat{A}(u) = 1$, then from Proposition \ref{prop:adj_to_1} we have that for all $v \in coA$, $R^-_{co\hat{A}(v)}$ is adjacent to $R^+_{\hat{A}(u)}$. If $u \in U^-$, then from Corollary \ref{cor:big_adjacent}, there is $v \in U^0 \cup U^+$ such that $R^-_{co\hat{A}(v)}$ is adjacent to $R^+_{\hat{A}(u)}$. If $v \in U^+$, then also $v \in coA$ since $U^+ \in coA$. If $v \in U^0$, then either $\hat{A}(v)=0$ or $\hat{A}(v) = 1$. If $\hat{A}(v) = 1$ then the fact that $R^-_{co\hat{A}(v)}$ is adjacent to $R^+_{\hat{A}(u)}$ is equivalent to 
$$
\hat{A}(u) = I(N(\hat{A}(v)), N(\widetilde{R}(v,u))) = I(0, N(\widetilde{R}(v,u))) = 1,
$$
which contradicts the assumption that $u \in U^-$. The last equality holds because of the ordering property (\ref{eq:ordering_property}). Therefore, it holds that $\hat{A}(v)=0$ which implies $v \in coA$. This proves the first part of the proposition. 
\end{proof}

For a solution $\hat{A}$ of optimization problem (\ref{eq:general_optimization}), we have that $\hat{A}(u)$ for $u \in U$ represents the degree up to which $u$ belongs to decision class $A$, while $co\hat{A}(u)$ represents the degree up to which $u$ belongs to decision class $coA$. We may be interested to calculate only the degrees $\hat{A}(u)$ for $u \in A$ and degrees $co\hat{A}(u)$ for $u \in coA$. 

\begin{proposition}
Denote $\beta_u = \hat{A}(u)$ for $u \in A$ and $\beta_u = N(\hat{A}(u))$ for $u \in coA$. Let $L$ be of $\lor$-type and $N$-dual preserving and symmetric. Then, in the classification case, problem (\ref{eq:general_optimization}) is equivalent to
\begin{equation}
\label{eq:disjoint_binary_optimization}
\begin{aligned}
&\text{minimize}  && \displaystyle\sum_{u \in U} L(1, \beta_u) \\
&\text{subject to} && T(\beta_u, \beta_v) \leq N(\widetilde{R}(v,u)),  \quad   u \in A, v \in coA\\
  &              &&0 \leq \beta_u \leq 1, \quad u \in U.
\end{aligned}
\end{equation}
\end{proposition}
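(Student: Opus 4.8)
The plan is to show that the change of variables $\hat{A}\leftrightarrow\beta$ in the statement sets up a bijection between the sets of optimal solutions of the two problems that preserves the objective value, so that in particular the two optimal values coincide. Throughout I work, as in Proposition~\ref{eq:classification_adjacency} and the rest of Section~\ref{sec:disjoint_and_adjacent_granules}, in the IMTL setting, so that $N$ is involutive. First I would check that the objectives agree under the substitution: for $u\in A$ we have $A(u)=1$ and $\beta_u=\hat{A}(u)$, hence $L(A(u),\hat{A}(u))=L(1,\beta_u)$; for $u\in coA$ we have $A(u)=0$ and $\beta_u=N(\hat{A}(u))$, and since $L$ is $N$-duality preserving (Definition~\ref{def:perserving_duality}) and symmetric,
\[
L(0,\hat{A}(u))=L(N(\hat{A}(u)),N(0))=L(N(\hat{A}(u)),1)=L(1,N(\hat{A}(u)))=L(1,\beta_u).
\]
Hence $\sum_{u\in U}L(A(u),\hat{A}(u))=\sum_{u\in U}L(1,\beta_u)$ for any $\hat{A}$ and $\beta$ related by the substitution, regardless of feasibility.

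Next I would handle the feasibility correspondence, starting with the easy direction. By Proposition~\ref{prop:disjoint_complement} together with Proposition~\ref{prop: t disjoint}, granular representability of $\hat{A}$ is equivalent to $T(\hat{A}(u),N(\hat{A}(v)))\le N(\widetilde{R}(v,u))$ for all $u,v\in U$. Restricting to the pairs with $u\in A$ and $v\in coA$ and substituting $\hat{A}(u)=\beta_u$, $N(\hat{A}(v))=N(N(\beta_v))=\beta_v$, this subfamily becomes exactly $T(\beta_u,\beta_v)\le N(\widetilde{R}(v,u))$, i.e.\ the constraint of~(\ref{eq:disjoint_binary_optimization}). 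Therefore every $\hat{A}$ feasible for~(\ref{eq:general_optimization}) yields, via the substitution, a $\beta$ feasible for~(\ref{eq:disjoint_binary_optimization}) with the same objective; so the optimal value of~(\ref{eq:disjoint_binary_optimization}) is at most that of~(\ref{eq:general_optimization}), and any optimal $\hat{A}$ transports to a feasible point of~(\ref{eq:disjoint_binary_optimization}) attaining that value.

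The converse is the substantial part, and it genuinely requires optimality: the constraint of~(\ref{eq:disjoint_binary_optimization}) captures only the cross-class subfamily of the representability constraints, and a generic feasible $\beta$ (say $\beta\equiv 0$) does not produce a granularly representable $\hat{A}$. Let $\beta^{*}$ be optimal for~(\ref{eq:disjoint_binary_optimization}) and $\hat{A}^{*}$ the associated fuzzy set ($\hat{A}^{*}=\beta^{*}$ on $A$, $\hat{A}^{*}=N(\beta^{*})$ on $coA$), so that $\hat{A}^{*}$ and $\beta^{*}$ are related by the substitution and hence (by the first step) have equal objective. Since $L$ is of $\lor$-type, $L(1,\cdot)$ is non-increasing on $[0,1]$ and strictly decreasing on $[0,1)$, so at the optimum each coordinate is as large as the constraints of~(\ref{eq:disjoint_binary_optimization}) permit; explicitly, for $u\in A$,
\[
\beta^{*}_u=\min\big(1,\ \min_{w\in coA} I(\beta^{*}_w, N(\widetilde{R}(w,u)))\big),
\]
and symmetrically for $u\in coA$. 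I would then feed these tightness identities into the three remaining subfamilies of the representability condition for $\hat{A}^{*}$ — the pairs inside $A$, the pairs inside $coA$, and the pairs $(u,v)$ with $u\in A$, $v\in coA$ — and verify each using $T$-transitivity of $\widetilde{R}$ together with properties~(\ref{eq:t_norm_implicator_property2}), (\ref{eq:t_norm_implicator_property}) and~(\ref{eq:implicator_negator_property3}). The mechanism is that when the minimum defining $\beta^{*}_v$ is attained at some $w_0\in coA$, the transitive inequality $\widetilde{R}(w_0,v)\ge T(\widetilde{R}(w_0,u),\widetilde{R}(u,v))$ (or its mirror image) forces the bound $w_0$ imposes on $\beta^{*}_u$ to be compatible with the value of $\beta^{*}_v$, which is precisely what the missing constraint asks for. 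Alternatively, one can transport the problem and invoke the self-adjacency/tightness of an optimum already established in Lemma~\ref{lemma:small_self_adjacency} and Theorem~\ref{thm:big_self_adjacency}. Once $\hat{A}^{*}$ is known to be granularly representable it is feasible for~(\ref{eq:general_optimization}) with value equal to the optimal value of~(\ref{eq:disjoint_binary_optimization}), giving the reverse inequality; combined with the easy direction, the optimal values coincide and the substitution restricts to a bijection between the optimal solution sets.

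The step I expect to be the main obstacle is this last one: showing that maximality of $\beta^{*}$ subject only to the cross-class constraints already forces the within-class and reversed cross constraints of granular representability. It is here that $T$-transitivity of $\widetilde{R}$ is indispensable, and where the tempting shortcut of treating the substitution as a bijection between the full feasible regions — rather than only between the optimal sets — breaks down.
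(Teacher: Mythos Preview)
Your proposal is correct and follows essentially the same route as the paper: match the objectives via the $N$-duality and symmetry of $L$, identify the cross-class subfamily of the disjointness form of representability with the constraint of~(\ref{eq:disjoint_binary_optimization}), and then use that at an optimum each $\beta^*_u$ attains its tight upper bound so that $T$-transitivity together with~(\ref{eq:t_norm_implicator_property2}) and~(\ref{eq:implicator_negator_property3}) forces the remaining representability constraints. The paper carries out that last step explicitly (for $u,v\in A$ pick $w\in coA$ with $\beta_u=I(\beta_w,N(\widetilde{R}(w,u)))$, bound $\beta_w$ via the constraint on $(v,w)$, and chain the inequalities), which is exactly what your sketch promises. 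One small slip: in your list of ``the three remaining subfamilies'' the cross case should be $u\in coA$, $v\in A$ --- the case $u\in A$, $v\in coA$ is the one you already identified with~(\ref{eq:disjoint_binary_optimization}).
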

\begin{proof}
With the new notation and for $L$ being $N$-duality preserving, the objective function of (\ref{eq:general_optimization}) becomes:
$$
\sum_{u \in A} L(1, \beta_u) + \sum_{u \in coA} L(0, N(\beta_u)) = \sum_{u \in A} L(1, \beta_u) + \sum_{u \in coA} L(\beta_u, 1) = \sum_{u \in U} L(1, \beta_u).
$$
The granularity constraints from (\ref{eq:general_optimization}) are now divided into 3 groups:
\begin{itemize}
    \item Granularity constraints for pairs of objects $u,v \in A$: 
    $$
    \beta_u \geq T(\widetilde{R}(u,v), \beta_v).
    $$
    \item Granularity constraints for pairs of objects $u,v \in coA$:
    $$
    N(\beta_u) \geq T(\widetilde{R}(u,v), N(\beta_v)) \Leftrightarrow \beta_v \geq T(\widetilde{R}(v, u), \beta_u).
    $$
    \item Granularity constraints for pairs of objects $u \in A, v \in coA$. In that case, the granularity condition is expressed using $T$-disjointness (according to Proposition \ref{prop:disjoint_complement}) as:
    $$
    T(\beta_u, \beta_v) \leq N(\widetilde{R}(v,u)).
    $$
\end{itemize}
The goal is to show that the first two groups of constraints are redundant.
We first prove that the adjacency from Proposition \ref{eq:classification_adjacency} still holds in problem (\ref{eq:disjoint_binary_optimization}), i.e., for every $u \in A$, there is $v \in coA$ such that $\beta_u = I(\beta_v, N(\widetilde{R}(v,u)))$ and that for all $v \in coA$, there is $u \in A$ such that $\beta_v = I(\beta_u, N(\widetilde{R}(v,u)))$. Using the residuation property, we have that
$$
T(\beta_u, \beta_v) \leq N(\widetilde{R}(v,u)) \Leftrightarrow \beta_u \leq I(\beta_v, N(\widetilde{R}(v,u))).
$$ 
If for some $u$ and for all $v$ it holds that $\beta_u < I(\beta_v, N(\widetilde{R}(v,u)))$, then there is $\epsilon > 0$ such that replacing $\beta_u$ with $\beta_u + \epsilon$ leads to a smaller objective function since the loss function is of $\lor$-type. This leads to a contradiction with the assumption that $\beta$ is optimal. 
Again, using the residuation property we have
$$
T(\beta_u, \beta_v) \leq N(\widetilde{R}(v,u)) \Leftrightarrow \beta_v \leq I(\beta_u, N(\widetilde{R}(v,u))).
$$
Using the same arguments as above, we get the second equality. 

Next, we prove that the granularity criteria for $\beta_u$ and $\beta_v$ for $u,v \in A$ are satisfied. Let $w \in coA$ such that $\beta_u = I(\beta_w, N(\widetilde{R}(w,u)))$. From the constraints, it holds that $\beta_v \leq I(\beta_w, N(\widetilde{R}(w,v))) \Leftrightarrow \beta_w \leq I(\beta_v, N(\widetilde{R}(w,v)))$. Then, we have that
\begin{align*}
    \beta_u &= I(\beta_w, N(\widetilde{R}(w,u))) \\
    & \geq I(I(\beta_v, N(\widetilde{R}(w,v))),N(\widetilde{R}(w,u))) \\
    & \geq T(\beta_v, I(N(\widetilde{R}(w,v)), N(\widetilde{R}(w,u)))) \\
    & = T(\beta_v, I(\widetilde{R}(w,u), \widetilde{R}(w,v))) \\
    & \geq T(\beta_v, \widetilde{R}(u,v)),
\end{align*}
which is exactly the granularity condition for $\beta_u$ and $\beta_v$. Now, let $u, v \in coA$ and let $w \in A$ be such that $\beta_u = I(\beta_w, N(\widetilde{R}(u,w)))$.  From the constraints, it holds that $\beta_w \leq I(\beta_v, N(\widetilde{R}(v,w)))$. Using a similar reasoning as above, we conclude that the granularity condition is also satisfied for $\beta_u$ and $\beta_v$ when $u,v \in coA$.
Since the granularity constraints for pairs of objects from the same class are a consequence of the $T$-disjointness constraints, they can be omitted in the optimization problem. 
\end{proof}

From now on, we assume that $\widetilde{R}(u,v)$ is also a symmetric relation, i.e., it is a $T$-equivalence. In such case, the granules in $A$ and $coA$ are of the same type. We now consider crisp equivalence relation $S$ on $U$ defined as $S(u,v) = 1$ if $u$ and $v$ are from the same decision class, and $S(u,v)=0$ otherwise. If $u$ and $v$ are from different decision classes then $I(\widetilde{R}(u,v), S(u,v)) = N(\widetilde{R}(u,v))$, while $I(\widetilde{R}(u,v), S(u,v)) = 1$ otherwise. With relation $S$, the $T$-disjointness constraints from (\ref{eq:disjoint_binary_optimization}) may be reformulated as
\begin{align}
\label{eq:T-disjoint_condition}
    T(\beta_u, \beta_v) \leq I(\widetilde{R}(u,v), S(u,v))), \quad u, v \in U.
\end{align}

Here, we need to note that $S$, as an equivalence relation, can distinguish among more than two decision classes. In other words, $S$ can be used to model multi-class classification problems. Bearing this in mind, a multi-class extension of problem (\ref{eq:disjoint_binary_optimization}) can be formulated as:
\begin{equation}
\label{eq:disjoint_optimization}
\begin{aligned}
&\text{minimize}  && \displaystyle\sum_{u \in U} L(1, \beta_u) \\
&\text{subject to} && T(\beta_u, \beta_v) \leq I(\widetilde{R}(u,v), S(u,v))),  \quad   u , v \in U\\
  &              &&0 \leq \beta_u \leq 1, \quad u \in U.
\end{aligned}
\end{equation}
We name the result of problem (\ref{eq:disjoint_optimization}) as \textit{multi-class granular approximation}.

We need to stress that while the binary classification problem (\ref{eq:disjoint_binary_optimization}) with a $T$-preorder relation is suitable for the binary monotone classification problems, i.e., classification problems where there exists a monotone relationship between condition attributes and a decision attribute, the problem (\ref{eq:disjoint_optimization}) with a $T$-equivalence relation is suitable for ordinary classification problems i.e., problems where such monotone relationship cannot be inferred.

\section{Calculation}
\label{sec:calculation}
In this section, we use the notation: $M(u,v) = I(\widetilde{R}(u,v), S(u,v))$.
We start with an important property.
\begin{proposition}
Problem (\ref{eq:disjoint_optimization}) has a feasible solution.
\end{proposition}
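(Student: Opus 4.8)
The plan is simply to exhibit an explicit point of the feasible region; no optimization is involved. The cleanest witness is the all-zero assignment: set $\beta_u = 0$ for every $u \in U$. The box constraints $0 \le \beta_u \le 1$ then hold trivially, and for every pair $u,v \in U$ property (\ref{eq:t-norm_smaller_parameters}) gives $T(\beta_u,\beta_v) = T(0,0) \le 0$, hence $T(\beta_u,\beta_v) = 0$; since $M(u,v) = I(\widetilde{R}(u,v), S(u,v)) \in [0,1]$, the disjointness constraint $T(\beta_u,\beta_v) \le M(u,v)$ is satisfied. Thus the feasible set of (\ref{eq:disjoint_optimization}) is nonempty, which is all that is claimed.

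If one prefers a nontrivial witness — for instance, to later argue that a \emph{maximal} feasible solution exists, or to have a better starting point for the calculation in this section — a natural alternative is $\beta_u = \min\{N(\widetilde{R}(u,v)) : v \in U,\ S(u,v) = 0\}$, with the convention that the minimum over the empty index set equals $1$ (this occurs exactly in the degenerate case where the decision class of $u$ is the only class present). To check feasibility of this choice, note that when $S(u,v) = 1$ we have $M(u,v) = I(\widetilde{R}(u,v),1) = 1$ by the ordering property (\ref{eq:ordering_property}), so the same-class constraints are vacuous; and when $S(u,v) = 0$, property (\ref{eq:t-norm_smaller_parameters}) yields $T(\beta_u,\beta_v) \le \beta_u \le N(\widetilde{R}(u,v)) = M(u,v)$, so the cross-class constraints hold as well.

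There is no genuine obstacle here: the statement is a well-posedness remark ensuring that problem (\ref{eq:disjoint_optimization}) is meaningful (and, together with continuity of $L$ and compactness of $[0,1]^{|U|}$ intersected with the closed constraint set, that an optimal multi-class granular approximation actually exists). The only point requiring a little care is the single-class boundary case, which the convention $\inf \emptyset = 1$ in the second construction handles cleanly; for the all-zero witness even this is unnecessary.
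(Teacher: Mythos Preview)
Your proof is correct. The all-zero assignment is indeed feasible: $T(0,0)=0\le M(u,v)$ for every pair, and the box constraints are immediate. Your second, nontrivial witness is also valid, using only (\ref{eq:t-norm_smaller_parameters}), the ordering property, and the symmetry of $\widetilde{R}$ assumed just before (\ref{eq:disjoint_optimization}).

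This is a genuinely different route from the paper. The paper does not use the trivial all-zero point; instead it fixes an ordering $u_1,\dots,u_n$, picks $\beta_{u_1}\in[0,1]$ arbitrarily, and then sets $\beta_{u_i}=\min_{j<i} I(\beta_{u_j},M(u_j,u_i))$ for $i>1$, verifying the disjointness constraint via the residuation property. That construction yields a feasible point in which each $\beta_{u_i}$ is \emph{adjacent} (in the sense of Definition~\ref{def:adjecency_relation}) to some earlier $\beta_{u_j}$, so it doubles as a reasonable initialisation for the linear/quadratic programs (\ref{eq:T_L_linear})--(\ref{eq:T_L_quadratic}). Your argument is shorter and entirely sufficient for the proposition as stated; the paper's argument buys a nontrivial starting point with the adjacency structure already in place. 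Your remark about compactness and continuity giving existence of an optimum is a useful observation that the paper leaves implicit.
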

\begin{proof}
We construct a feasible solution. Let $u_1, \dots , u_n$ be an ordering of objects from $U$. We apply the following procedure.
\begin{itemize}
    \item[1)]  $\beta_{u_1}$ is a random value from $[0,1]$.
    \item[2)] For $1 < i \leq n$, $\beta_{u_i} = \min \{ I(\beta_{u_j}, M(u_j, u_i)) ; j < i \}$.
\end{itemize}
The adjacency property is obvious from the construction. We have to prove the granularity property. Let $u_i$ and $u_k$ be two objects for which $k < i$. Since $\beta_{u_i} = \min_{j < i} I(\beta_{u_j}, M(u_i, u_j))$, it holds that $\beta_{u_i} \leq I(\beta_{u_k}, M(u_i, u_k))$. From the residuation property, this is equivalent to $T(\beta_{u_i}, \beta_{u_k}) \leq M(u_i, u_k)$.
\end{proof}

For different IMTL fuzzy connectives and for different loss functions $L$, problem (\ref{eq:disjoint_optimization}) may take forms that cannot be efficiently solved in practice. However, we will consider the problem for $L$ being a scaled form of $MAE$ or $MSE$ (loss functions (\ref{eq:mae}) and (\ref{eq:mse})), and $T$ being isomorphic to the Łukasiewicz $t$-norm.

For such fuzzy connectives, the constraints of (\ref{eq:disjoint_binary_optimization}) are expressed as
\begin{align*}
    &\varphi^{-1}(\max(\varphi(\beta_u) + \varphi(\beta_v) - 1, 0)) \leq M(u,v)\\
    \Leftrightarrow & \max(\varphi(\beta_u) + \varphi(\beta_v) - 1, 0) \leq  \varphi(M(u,v))\\
    \Leftrightarrow & \varphi(\beta_u) + \varphi(\beta_v) \leq 1 + \varphi(M(u,v)),
\end{align*}
for isomorphism $\varphi$ and for $u,v \in U$. We introduce new variables $\forall u \in U, \, \alpha_u = \varphi(\beta_u)$ and $\forall u,v \in U, \, M_{\varphi}(u,v) = \varphi(M(u,v))$. With the new notations, the previous constraints may be expressed as
\begin{align*}
  \alpha_u + \alpha_v \leq 1 + M_{\varphi}(u,v).
\end{align*}
For the scaled mean absolute error $L_{MAE, \varphi}$, the objective function becomes
$$
\sum_{u \in U} |\varphi(1) - \varphi(\beta_u)| = |U| - \sum_{u \in U} \alpha_u,
$$
which leads to the optimization problem

\begin{equation}
\label{eq:T_L_linear}
\begin{array}{ll@{}ll}
&\text{maximize}  && \displaystyle\sum_{u \in U}  \alpha_u \\
&\text{subject to}    &&\alpha_u + \alpha_v \leq 1 + M_{\varphi}(u,v), \quad  u, v \in U\\
  &              && 0 \leq \alpha_u \leq 1, \quad u \in U.
\end{array}
\end{equation}
Optimization problem (\ref{eq:T_L_linear}) can be solved efficiently using linear programming techniques like the simplex method \cite{wolfe1959simplex}. 

For the scaled mean squared error $L_{MSE, \varphi}$, the objective function becomes
$$
\sum_{u \in U} (\varphi(1) - \varphi(\beta_u))^2 = \sum_{u \in U} (1 - \alpha_u)^2,
$$
which leads to the optimization problem

\begin{equation}
\label{eq:T_L_quadratic}
\begin{array}{ll@{}ll}
&\text{maximize}  && \displaystyle\sum_{u \in U} (1 - \alpha_u)^2\\
&\text{subject to}    &&\alpha_u + \alpha_v \leq 1 + M_{\varphi}(u,v), \quad  u, v \in U\\
  &              && 0 \leq \alpha_u \leq 1, \quad u \in U.
\end{array}
\end{equation}

Optimization problem (\ref{eq:T_L_quadratic}) can be solved efficiently using quadratic programming techniques like the simplex method variation for quadratic programming \cite{gass2003linear}.

\begin{figure}[!htb]
    \centering
    \includegraphics[width = 1\textwidth]{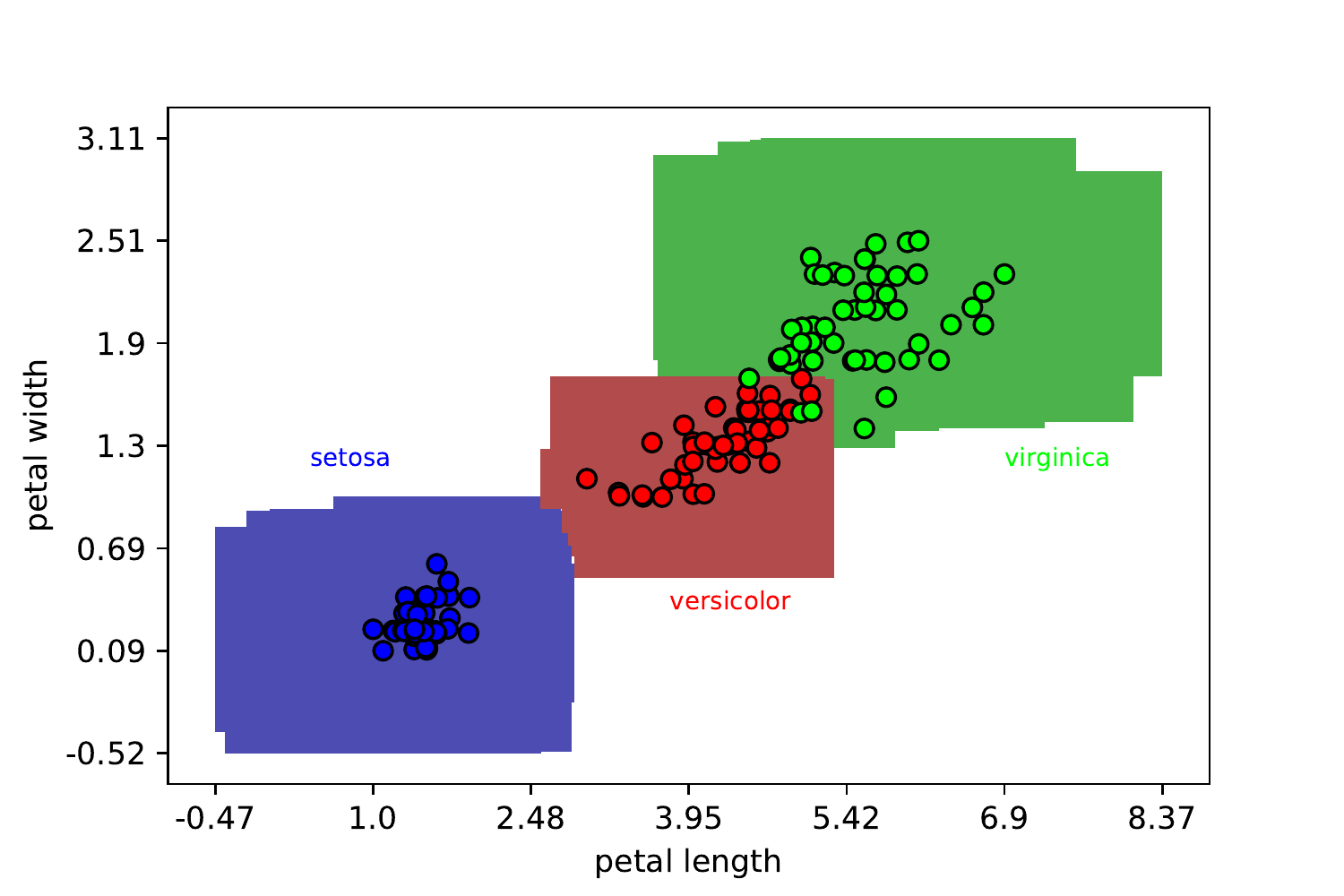}
    \caption{An example of the multi-class granular approximation on iris dataset constructed with relation (\ref{eq: triangular similarity})}
    \label{fig:iris_rectangular_granules}
\end{figure}
\begin{example}
We may see another example in Figure \ref{fig:iris_rectangular_granules}, where the objects come from the well-known iris dataset with, in this case, two features (petal length and petal width) and three classes (setosa, versicolor and virginica). The multi-class granular approximation is calculated by solving problem (\ref{eq:T_L_quadratic}) for $\widetilde{R}$ being triangular similarity (\ref{eq: triangular similarity}), and the granules are depicted using the obtained solution. On this figure, we can observe how granules look on the larger scale (in this case 150 objects). 
\end{example}

We provide another example with more complex shapes of granules.
\begin{example}
Consider a family of fuzzy relations defined as
\begin{equation}
\label{eq:general_similarity}
\widetilde{R}(u,v) = \max \bigg (1 - \frac{d(u,v)}{a}, 0 \bigg ),
\end{equation}
where $d$ is a metric (or distance function) on $U$ and $a$ is a positive real value (a parameter). It is easy to verify that such fuzzy relations are $T_L$-equivalences. More on the relationship between metrics and $T$-equality relations can be found in \cite{de2002metrics}. We now consider the Mahalanobis distance defined as \cite{mahalanobis1936distance}: 
\begin{equation}
    d(u,v)_{\mathbf{\Sigma}} = \sqrt{(\mathbf{u} - \mathbf{v})^T \mathbf{\Sigma} (\mathbf{u} - \mathbf{v})},
\end{equation}
where $\mathbf{u}$ is a numerical vector representing condition attributes of instance $u$ while $\mathbf{\Sigma}$ is a symmetric and positive-definite matrix. If $\mathbf{\Sigma}$ is an identity matrix, then $d(u,v)_{\mathbf{\Sigma}}$ is equal to the Euclidean distance. 

It is also easy to verify that the shape of level sets of granules, used to represent them in 2 dimensions, are in the case of family (\ref{eq:general_similarity}) equal to the shape of equidistant points from the origin w.r.t. metric $d$. In the case of the Mahalanobis distance, the shape of granules will be elliptical. The axis of such ellipses is controlled by the eigenvalues of $\mathbf{\Sigma}$ while the rotation is controlled by the eigenvectors of $\mathbf{\Sigma}$.
\begin{figure}[!htb]
    \centering
    \includegraphics[width = 1\textwidth]{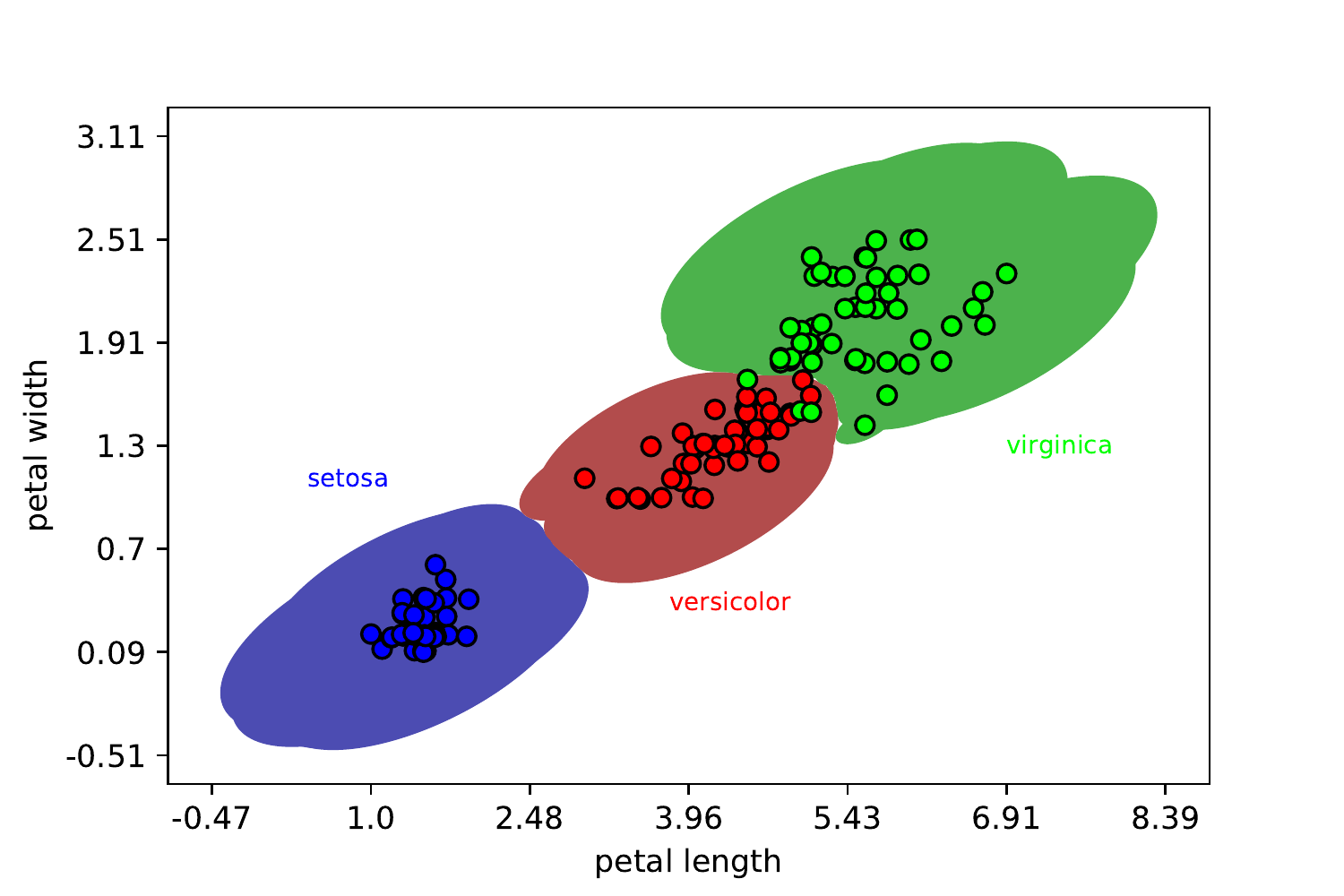}
    \caption{An example of the multi-class granular approximation on iris dataset constructed with relation (\ref{eq:general_similarity})}.
    \label{fig:iris_elliptical_granules}
\end{figure}

In Figure \ref{fig:iris_elliptical_granules}, we present an example of granules from the multi-class granular approximation calculated by solving (\ref{eq:T_L_quadratic}) and by using fuzzy relation (\ref{eq:general_similarity}) with $d$ being the Mahalanobis distance. The approximation is calculated on the iris dataset with two attributes and three decision classes as described above. The granules have the elliptical shape where the ratio of width and height of the ellipses is $2:1$. The rotation angle in this case is $45^{\circ}$. 
\end{example}

In Figures \ref{fig:iris_rectangular_granules} and \ref{fig:iris_elliptical_granules}, we can observe that some green points are depicted without their granules and are completely surrounded by the granules of red points. This basically means that the multi-class granular approximation values of these green points is smaller than 0.5 (hence, the granules cannot be drawn), and that the red granules are covering those green points. Therefore, it is suitable to change the labels of those green points into red. We can conclude that the learning, characterized by optimization problems (\ref{eq:T_L_linear}) and (\ref{eq:T_L_quadratic}), can be applied in classification problems and lead to an optimal relabeling of instances based on the loss function $L$.

\section{Conclusion and Future work}
\label{sec:conclusion_and_future_work}
We have introduced the concepts of disjoint and adjacent fuzzy granules and discussed their connection with the concept of granular approximation introduced before. Based on disjoint and adjacent granules, a granular approximation concept was applied to the multi-class classification problem leading to the definition of a multi-class granular approximation. At the end, we explained how to calculate it efficiently in practice for the Łukasiewicz $t$-norm and other fuzzy connectives that it generates, using linear and quadratic programming methods.


We consider the following future research directions.
\begin{itemize}
    \item We discussed in Section \ref{sec:calculation} that newly obtained multi-class granular approximations can be used for prediction purposes. The main focus of the future work will be to explore that more in depth i.e., to investigate the performance of optimization procedures (\ref{eq:T_L_linear}) and (\ref{eq:T_L_quadratic}) in classification problems. The goal will be to select classification datasets, set up a benchmark and compare the performance of the approaches (\ref{eq:T_L_linear}) and (\ref{eq:T_L_quadratic}) with other classification approaches. 
    \item In order to define a multi-class granular approximation, we used only symmetric loss functions compared to the original granular approximation that considers also non-symmetric loss functions (e.g., $p$-quantile loss) \cite{palangetic2021novel}. It would be worth investigating if application of a non-symmetric loss can help to control ``preference" toward some decision class (which is important in, e.g., imbalanced classification problems) and how such loss functions would perform in classification tasks. 
    \item In this article, $S$ is a crisp equivalence relation. However, there is an option to extend it to a fuzzy relation in order to handle regression tasks. In such a scenario, the understanding of granules and other properties is different than for $S$ being crisp. Hence, more analysis is required in the case of $S$ being a fuzzy relation.
    
    \item It would be worth to explore if the extension as for (\ref{eq:disjoint_optimization}) can be obtained for a $T$-preorder relation instead of a $T$-equivalence relation.
    Such extension could be then applied in monotone classification problems.
\end{itemize}

\section*{Acknowledgements}

Marko Palangetić and Chris Cornelis would like to thank Odysseus project from Flanders Research Foundation (FWO), grant no. G0H9118N, for funding their research.
Salvatore Greco wishes to acknowledge the support of the Ministero dell’Istruzione, dell’Universitá e della Ricerca (MIUR) - PRIN 1576 2017, project “Multiple Criteria Decision Analysis and Multiple Criteria Decision Theory”, grant 2017CY2NCA. 
Roman Słowiński is acknowledging the support of grant 0311/SBAD/0700.



\bibliographystyle{elsarticle-num} 


\bibliography{bibliography}
\end{document}